\theoremstyle{definition}
\setlist[enumerate]{leftmargin=0.5cm,topsep=0pt,itemsep=-2pt}
\setlist[itemize]{leftmargin=0.5cm,topsep=0pt,itemsep=-2pt}
\newcommand{\probspace}{\mathscr{P}}
\icmltitlerunning{VA-learning as a more efficient alternative to Q-learning}
\begin{document}

\twocolumn[
\icmltitle{VA-learning as a more efficient alternative to Q-learning}



\icmlsetsymbol{equal}{*}

\begin{icmlauthorlist}
\icmlauthor{Yunhao Tang}{dm}
\icmlauthor{R\'emi Munos}{dm}
\icmlauthor{Mark Rowland}{dm}
\icmlauthor{Michal Valko}{dm}
\end{icmlauthorlist}

\icmlaffiliation{dm}{Google DeepMind}

\icmlcorrespondingauthor{Yunhao Tang}{robintyh@deepmind.com}

\icmlkeywords{Machine Learning, ICML}

\vskip 0.3in
]



\printAffiliationsAndNotice{\icmlEqualContribution} 

\begin{abstract}
In reinforcement learning, the advantage function is critical for policy improvement, but is often extracted from a learned Q-function. A natural question is: \textit{Why not learn the advantage function directly?} In this work, we introduce VA-learning, which directly learns advantage function and value function using bootstrapping, \textit{without} explicit reference to Q-functions. VA-learning learns off-policy and enjoys similar theoretical guarantees as Q-learning. Thanks to the direct learning of advantage function and value function, VA-learning improves the sample efficiency over Q-learning both in tabular implementations and deep RL agents on Atari-57 games. We also identify a close connection between VA-learning and the dueling architecture, which partially explains why a simple architectural change to DQN agents tends to improve performance.
\end{abstract}

\section{Introduction}

Developed just over three decades ago, Q-learning \citep{watkins1989learning,watkins1992q} is one of the most fundamental algorithms of reinforcement learning (RL). Q-learning progresses in an iterative fashion, updating the current value predictions by bootstrapping from its own future value predictions. In addition to its theoretical appeal, the incremental nature of Q-learning is also compatible with powerful deep learning machinery, which has fueled recent breakthroughs in Atari games
\citep{mnih2013}.

Q-learning learns the Q-function $Q(x,a)$, defined as the expected return obtained starting from certain state-action pair, and executing an optimal policy. By splitting the Q-function into a state-dependent value function $V(x)$ and a residual advantage function $A(x,a)$, we arrive at the commonly used decomposition
\begin{align*}
    Q(x,a) = V(x) + A(x,a).
\end{align*}
In many situations accurately approximating the advantage function, which measures the relative performance between actions, is the end goal of the algorithm. Instead of learning advantage functions implicitly via Q-functions, a natural question is whether it is possible to learn advantage functions directly. Unfortunately, unlike Q-functions, the advantage function does not obey a recursive equation (like the Bellman equation for Q-functions) and cannot be learned as a standalone object by bootstrapping from itself.

Our key remedy to resolving the above dilemma is \textit{learning an extra value function at the same time}. We introduce VA-learning (Section~\ref{sec:VA-learning}), an algorithm that derives it name from the fact that it directly learns a value function $V$ and an advantage function $A$. Importantly, the decomposition $Q=V+A$ does not constrain us from learning just the target value functions. In fact, as we will explain in detail, VA-learning derives its properties by learning a value function adapted to the data collection policy. On a high level, VA-learning is reminiscent of the dueling architecture for Q-learning \citep{wang2016}, which runs a vanilla Q-learning algorithm with a parameterization that decomposes Q-functions into value and advantage functions. While the dueling architecture is purely empirically motivated, we provide grounded theoretical guarantees to the performance of VA-learning.

Besides theoretical guarantees, we also find that in practice VA-learning is generally more superior to vanilla Q-learning, in both tabular and deep RL settings. A high level explanation is that through the decomposition $Q=V+A$, VA-learning explicitly allows for an extra degree of freedom such that the learning takes place at different rate across different components of the Q-function. Concretely, we generally expect the $V$ to be learned more quickly than $A$, as the former is shared across all actions. In the case of bootstrapped updates, this technique helps to increase the speed at which the advantage function and target Q-function are learned.
We now highlight a few crucial detailed properties enjoyed by VA-learning. 

\paragraph{Theoretical guarantee as Q-learning.} VA-learning enjoys the same theoretical guarantee as Q-learning (Section~\ref{sec:VA-learning}). The \emph{implied} Q-function of VA-learning $Q=V+A$ converges to the same target fixed point as Q-learning, whereas $V$ and $A$ converge to properly defined value and advantage function respectively.

\paragraph{Improved efficiency of tabular algorithms.} 
Through the decomposition $Q=V+A$, VA-learning effectively allows the shared part of the Q-function to be learned \emph{quickly} via the~$V$ component, and more \emph{slowly} via the~$A$ component.
When the learning targets are computed using bootstrapping, the accelerated effect of such a decomposition becomes even more profound, even in simple tabular MDPs (Section~\ref{sec:VA-learning})

\paragraph{Large-scale value-based learning.} VA-learning can also be used as a component within large-scale RL agents (Section~\ref{sec:b-dueling}). When implemented with function approximation, we draw an intriguing connection between VA-learning and the dueling architecture \citep{wang2016}. On the Atari-57 game suite, VA-learning provides robust improvements over the dueling and Q-learning baselines (Section~\ref{sec:exp}).

\section{Background}
Consider a Markov decision process (MDP) represented as the tuple $\left(\mathcal{X},\mathcal{A},P_R,P,\gamma\right)$ where $\mathcal{X}$ is a finite state space, $\mathcal{A}$ the finite action space, $P_R:\mathcal{X}\times\mathcal{A}\rightarrow \probspace(\mathbb{R})$ the reward kernel, $P:\mathcal{X}\times\mathcal{A}\rightarrow\probspace(\mathcal{X})$ the transition kernel and $\gamma\in [0,1)$ the discount factor. For any policy $\pi:\mathcal{X}\rightarrow\probspace(\mathcal{A})$, important quantities include Q-function $Q^\pi(x,a)\coloneqq\mathbb{E}_\pi\left[\sum_{t=0}^\infty\gamma^tr_t\;\middle|\;x_0=x,A_0=a\right]$, value function $V^\pi(x)\coloneqq \sum_a \pi(a|x) Q^\pi(x,a) $ and advantage function $A^\pi(x,a)\coloneqq Q^\pi(x,a)-V^\pi(x)$.

In policy evaluation, the aim is to compute the target Q-function $Q^\pi$ for a fixed target policy $\pi$. The target Q-function $Q^\pi$ can be approximated by applying the recursion
$
     Q_{t+1}= \mathcal{T}^\pi Q_t,
$
where $\mathcal{T}^\pi:\mathbb{R}^{\mathcal{X}\times\mathcal{A}}\rightarrow \mathbb{R}^{\mathcal{X}\times\mathcal{A}}$ is the Bellman evaluation operator. In control, the aim is to find an optimal policy $\pi^\star(\cdot|x)\coloneqq \arg\max_a Q^\ast(x,a)$ 
with Q-function $Q^\star(x,a)\coloneqq \max_\pi Q^\pi(x,a)$. It can be can be approximated, by applying the recursion
$
    Q_{t+1}= \mathcal{T}^\star Q_t$, 
with the Bellman control operator $\mathcal{T}^\star$.

In most applications, it is   infeasible to compute the above recursions exactly as they require analytic knowledge of the transition and reward kernel. Instead, from a given state $x\in\mathcal{X}$, it is more common to access a sampled transition $(x_t,a_t,r_t,x_{t+1})$ tuple at step $t\geq 0$,
\begin{align*}
    a_t\sim \mu(\cdot|x_t), r_t\sim P_R(\cdot|x_t,a_t), x_{t+1}\sim P(\cdot|x_t,a_t),
\end{align*}
where $\mu$ is the behavior policy, which for simplicity is assumed fixed and has full coverage over the entire action space $\mu(a|x)>0,\forall (x,a)\in\mathcal{X}\times\mathcal{A}$. Let $(p_t)_{t=0}^\infty$ and $(q_t)_{t=0}^\infty$ be any number arrays, in the following, we also use $p_{t+1}\!\overset{\alpha_t}{\leftarrow}q_t$ as shorthand notation for the incremental update $p_{t+1}=p_t+\alpha_t(q_t - p_t)$ with learning rate $\alpha_t$.

Let $(Q_t)_{t=0}^\infty$ be a sequence of estimated Q-functions. First we consider the update for the policy evaluation case, which is commonly known as TD-learning,
\begin{align}
    Q_{t+1}(x_t,a_t)\overset{\alpha_t}{\leftarrow} r_t + \gamma Q_t (x_{t+1},\pi),\label{eq:q-learning-eval}
\end{align}
where $Q_t(x,\pi) \coloneqq \sum_a \pi(a|x) Q_t(x,a)$. 
The back-up target $r_t + \gamma Q_t (x_{t+1},\pi)$ can be understood as a stochastic approximation to the evaluation Bellman recursion back-up target $\mathcal{T}^\pi Q_t$. 
In the control case, the Q-learning update is 
\begin{align}
   Q_{t+1}(x_t,a_t)\overset{\alpha_t}{\leftarrow} r_t + \gamma \max_a Q_t (x_{t+1},a).\label{eq:q-learning-control}
\end{align}
With a properly chosen learning rate scheme $(\alpha_t)_{t=0}^\infty$ and mild assumptions on the data process \citep{watkins1992q,tsitsiklis1994asynchronous,jaakkola1994onvergence}, TD-learning and Q-learning converge almost surely to $Q^\pi$ or $Q^\star$ respectively. 

\section{VA-learning}
\label{sec:VA-learning}

We now introduce VA-learning, the central object of stufy of the paper. At iteration $t$, VA-learning maintains a value function estimate $V_t(x)$ and advantage function estimate $A_t(x,a)$. Most importantly, unlike Q-learning, VA-learning does \emph{not} maintain a separate Q-function. To recover a Q-function estimate, VA-learning combines the value and advantage function estimate as 
\begin{align*}
    Q_t(x,a)\coloneqq V_t(x)+A_t(x,a),\forall (x,a)\in\mathcal{X}\times\mathcal{A}.
\end{align*}

\begin{algorithm}[t]
\label{algo:bdueling}
\begin{algorithmic}
\STATE  Initializations $V_0\in\mathbb{R}^{\mathcal{X}}$ and $A_0\in\mathbb{R}^{\mathcal{X}\times\mathcal{A}}$; behavior policy $\mu$ and learning rate sequence $(\alpha_t)_{t=0}^\infty$.
\FOR{$t=0,1,2,\dots,K$}
\STATE \textbf{Step 1.} Sample transition $(x_t,a_t,r_t,x_{t+1})$.
\STATE \textbf{Step 2.} Let $Q_t(x_t,a_t)=V_t(x_t)+A_t(x_t,a_t)$. Compute back-up target $\widehat{\mathcal{T}}Q_t(x_t,a_t)$ based on Eqn~\eqref{eq:vabackup-eval} for policy evaluation and Eqn~\eqref{eq:vabackup-control} for control.
\STATE \textbf{Step 3.} Update the value and advantage iterates
\begin{align*}
    V_{t+1}(x_t) &\overset{\alpha_t}{\leftarrow} \widehat{\mathcal{T}} Q_t(x_t,a_t) - \gamma A_t(x_{t+1},\mu), \nonumber \\
    A_{t+1}(x_t,a_t) &\overset{\alpha_t}{\leftarrow} \widehat{\mathcal{T}} Q_t(x_t,a_t) - \gamma A_t(x_{t+1},\mu) - V_t(x_t).
\end{align*}
\ENDFOR
\STATE  Output final $V_t$ and $a_t$.
\caption{Tabular VA-learning}
\end{algorithmic}
\end{algorithm}

\subsection{Policy evaluation and control}

Throughout, we assume access to the transition tuple $(x_t,a_t,r_t,x_{t+1})$ at time $t$ as in the TD-learning and Q-learning case. 
To better highlight the difference between VA-learning and TD-learning (the difference is similar between VA-learning and Q-learning), we start by defining the policy evaluation back-up target for TD-learning,
\begin{align}
    \widehat{\mathcal{T}}^\pi Q_t(x_t,a_t)\coloneqq r_t + \gamma Q_t(x_{t+1},\pi).\label{eq:vabackup-eval}
\end{align}
The policy evaluation recursion in Eqn~\eqref{eq:q-learning-eval} rewrites as 
$
    Q_{t+1}(x_t,a_t) \overset{\alpha_t}{\leftarrow} \widehat{\mathcal{T}}^\pi Q_t(x_t,a_t)
$. In contrast, policy evaluation VA-learning carries out the following recursion:
\begin{align}\tag{Policy evaluation VA-learning}
\begin{split}
    V_{t+1}(x_t) &\overset{\alpha_t}{\leftarrow} \widehat{\mathcal{T}}^\pi Q_t(x_t,a_t) - \gamma A_t(x_{t+1},\mu), \nonumber \\
    A_{t+1}(x_t,a_t) &\overset{\alpha_t}{\leftarrow} \widehat{\mathcal{T}}^\pi Q_t(x_t,a_t) - \gamma A_t(x_{t+1},\mu) - V_t(x_t). 
\end{split}
\end{align}
where we similarly define $A_t(x,\mu) \coloneqq \sum_a \mu(a|x) A_t(x,a)$.

\paragraph{Understanding the back-up targets.} To better understand the updates, note that the back-up targets for value estimate $V_t$ and advantage estimate $A_t$ share the common back-up target $\widehat{\mathcal{T}}^\pi Q_t(x_t,a_t)-A_t(x_{t+1},\mu)$. To better understand the back-up target, we rewrite it as the estimated Bellman operator $\widehat{\mathcal{T}}^\pi$ applied to a transformed Q-function $\widetilde{Q}_t(x_t,a_t)$,
\begin{align*}
    &\widehat{\mathcal{T}}^\pi Q_t(x_t,a_t) - \gamma A_t(x_{t+1},\mu) = \widehat{\mathcal{T}}^\pi \widetilde{Q}_t(x_t,a_t), 
\end{align*}
Here, the transformed Q-function $\widetilde{Q}_t(x_t,a_t)=V_t(x_t) + \widetilde{A}_t(x_t,a_t)$ has a special parameterization of its advantage function 
\begin{align*}
    \widetilde{A}_t(x_t,a_t) = A_t(x_t,a_t)-A_t(x_t,\mu)
\end{align*}
such that the advantage function has zero mean $\widetilde{A}_t(x_t,\mu)=0$ under behavior policy $\mu$.
Intriguingly, such a transformation is reminiscent of though distinct from
the dueling architecture for DQN \citep{wang2016}. We will draw further connections between VA-learning and dueling in Section~\ref{sec:b-dueling}.

We can interpret the value function estimate $V_{t+1}(x_t)$ as learning the \emph{average} of the common back-up targets, averaged over all actions taken from state $x$. Meanwhile, the advantage function estimate $A_{t+1}(x_t,a_t)$ learns the \emph{residual} of the back-up target (after subtracting the baseline $V_t(x)$). Intuitively, this hints at the fact that $V_t,A_t$ 
indeed learn certain value functions and advantage functions respectively. We will make the convergence behavior and fixed points of VA-learning more clear shortly. 

\paragraph{Control case.}
For the control case, we define the control back-up target similar to Q-learning,
\begin{align}
    \widehat{\mathcal{T}}^\star Q_t(x_t,a_t)\coloneqq r_t + \gamma \max_a Q_t(x_{t+1},a).\label{eq:vabackup-control}
\end{align}
Then control VA-learning carries out the recursion:
\begin{align}\tag{Control VA-learning}
\begin{split}
    V_{t+1}(x_t) &\overset{\alpha_t}{\leftarrow} \widehat{\mathcal{T}}^\star Q_t(x_t,a_t) - \gamma A_t(x_{t+1},\mu), \nonumber \\
    A_{t+1}(x_t,a_t) &\overset{\alpha_t}{\leftarrow} \widehat{\mathcal{T}}^\star Q_t(x_t,a_t) - \gamma A_t(x_{t+1},\mu) - V_t(x_t).
\end{split}
\end{align}

\subsection{Why VA-learning can be more efficient} \label{sec:VA-learning-why}

\begin{figure}[t]
    \centering
    \includegraphics[keepaspectratio,width=.32\textwidth]{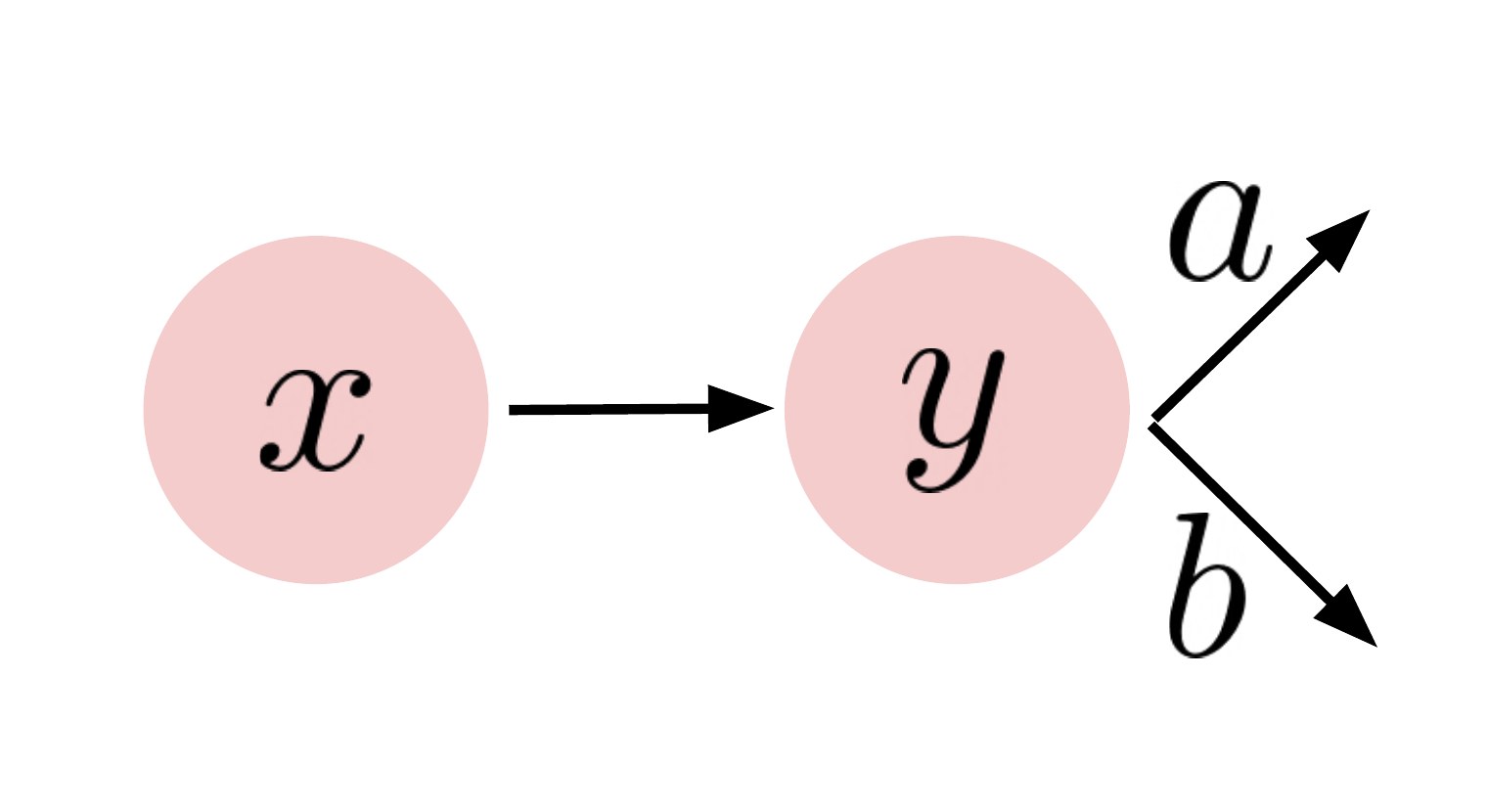}
    \caption{A simple scenario to illustrate the effectiveness of VA-learning over Q-learning. There are two states $x,y$ and from state $y$ there are two actions $a,b$. Assume there is a back-up target $\widehat{Q}(y,a)$, VA-learning will update the prediction for both $Q(y,a)$ and $Q(y,b)$ thanks to the shared value function $V(y)$. In contrary, Q-learning only updates the prediction $Q(y,a)$. The accelerated learning of $Q(y,b)$ helps accelerate learning $Q(x,\cdot)$ when bootstrapping from $Q(y,\cdot)$.}
    \label{fig:tabular-mdp-example}
\end{figure}

Before formally presenting the convergence behavior of VA-learning, we provide intuitive explanations and numerical examples to show why VA-learning can be often more efficient than TD-learning and Q-learning.

\paragraph{An illustrative example.} Consider a fixed state $y$ with two actions $a,b$. Imagine so far only action $a$ has been sampled from state $y$, TD-learning or Q-learning would have updated Q-function estimate $Q(y,a)$, while the Q-function estimate $Q(y,b)$ has never been updated. Now, for any state $x$ that precedes state~$y$, constructing the Q-learning back-up target at state $x$ may require bootstrapping from $Q(y,b)$. Since $Q(y,b)$ is never updated before, such a back-up target for state $x$ is of low quality. Nevertheless, Q-learning can still work by generating more data until the action $b$ at state $y$ is sampled more time. However, the situation above implies that propagating the correct information from $y$ to $x$ can be slowed down by not having enough transitions $(y,b)$ sampled.

For VA-learning, when the action $a$ is sampled at state $y$, the back-up target for $(y,a)$ will be split into back-up targets for $V(y)$ and $A(y,a)$. This ensures both $V(y)$ and $A(y,a)$ are updated to certain extent. Now, at the preceding state~$x$, when bootstrapping from $(y,b)$ to construct its back-up target, we effectively bootstrap from
$
    Q(y,b) = V(y) + A(y,b)
$. Although $A(y,b)$ has not been updated before, the bootstrap target can still utilize information contained in the updated value function estimate $V(y)$. This means the VA-learning back-up target at state $x$ is already potentially more informative compared to its counterpart in Q-learning. 

In summary, the potential benefits of VA-learning come from the decomposition of Q-function into a value function and an advantage function. Since the value function is shared across all actions, it can be learned faster by pooling back-up targets across all actions. When used as bootstrapped targets, the induced Q-function benefits from information contained in the value function, which in turn accelerates the learning process. 

To empirically validate the above claims, we examine the performance of Q-learning and VA-learning under the policy evaluation case in tabular MDPs. We carry out recursive updates based on a fixed number of trajectories under behavior policy $\mu$. We examine the error of the advantage estimate $\left\lVert \widehat{A}_t - A^\pi\right\rVert_2$ at update iteration $k$, as the advantage function error is also indicative of performance in the control case. VA-learning provides significant improvements over Q-learning both in terms of convergence speed and asymptotic accuracy. Detailed results are shown in Figure~\ref{fig:tabular-adv} (Appendix~\ref{appendix:exp}).

\begin{figure}[t]
    \centering
    \includegraphics[keepaspectratio,width=.42\textwidth]{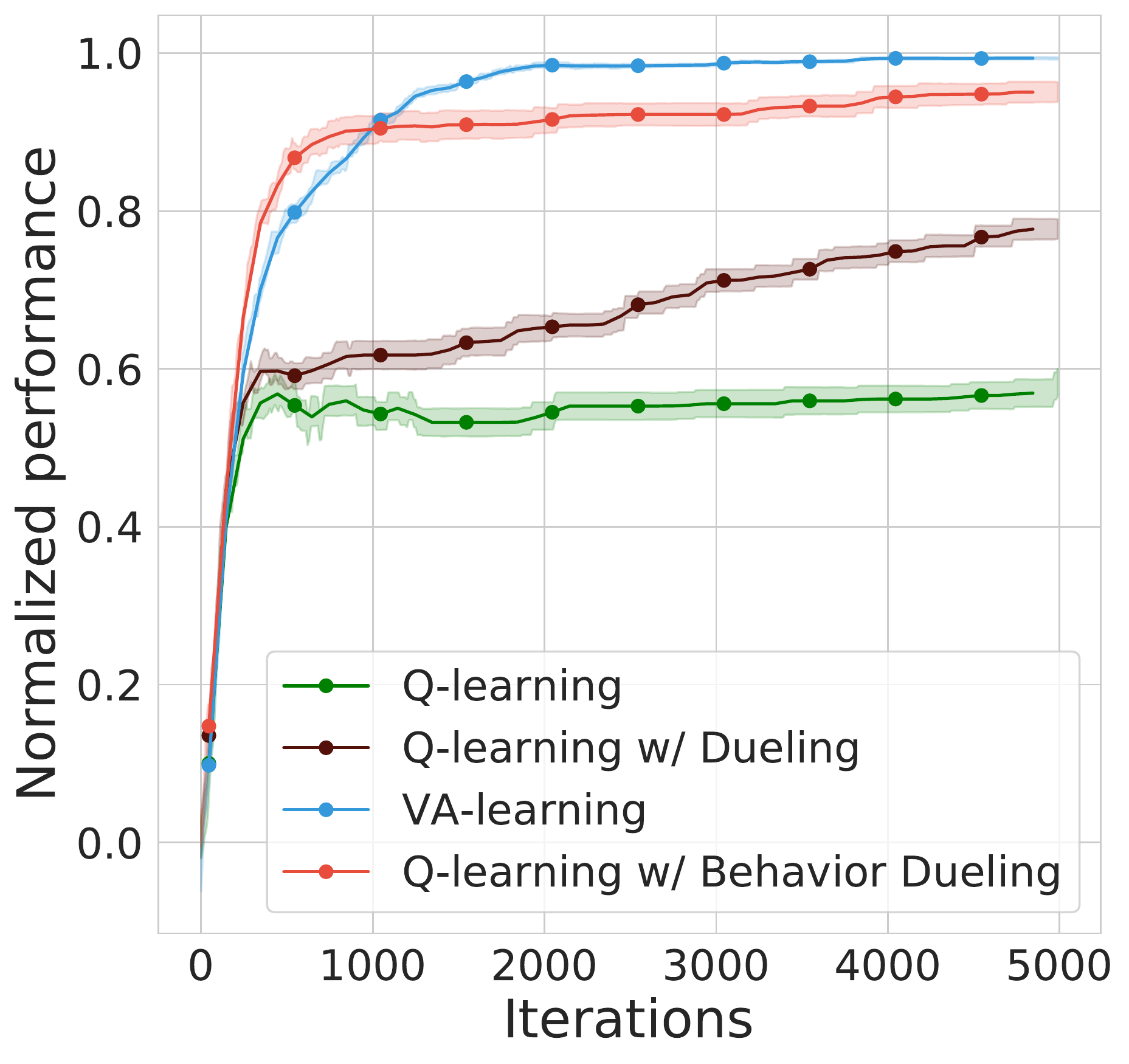}
    \caption{Comparing VA-learning (Section~\ref{sec:VA-learning}), Q-learning with behavior dueling (Section~\ref{sec:b-dueling}), Q-learning with uniform dueling \citep{wang2016sample} and regular Q-learning. We experiment on tabular MDPs with fixed behavior policy $\mu=\epsilon u + (1-\epsilon)\pi_\text{det}$ for some randomly sampled and fixed deterministic policy $\pi_\text{det}$, uniform policy $u$ and $\epsilon=0.8$. The performance evaluates the greedy policy with learned Q-function. New algorithmic variants significantly outperform prior methods.} 
    \label{fig:tabular}
\end{figure}

\paragraph{Can VA-learning underperform Q-learning?}
VA-learning is arguably not \emph{always} more sample efficient than Q-learning. The decomposition $Q(x,a)=V(x)+A(x,a)$, from which VA-learning is derived, assumes that it is useful to share information (i.e.,  $V(x)$) across actions from the same state $x$. When the Q-function gap from a common state $\left|Q(x,a)-Q(x,b)\right|$ is much larger than the gap between different states $\left|V(x)-V(y)\right|$, it is potentially better to learn $Q(x,a)$ and $Q(x,b)$ separately rather than sharing a common value function. Nevertheless, in many practical scenarios, we should expect the utility in sharing values across actions starting from a single state. VA-learning should generally outperform Q-learning, as we show in the following tabular and deep RL experiments.

\subsection{Convergence of VA-learning}
\label{sec:VA-learning-conv}

To understand the behavior of VA-learning more precisely, we consider the expected recursive update that sample-based VA-learning approximates, similar to how Q-learning approximates Bellman recursions. To facilitate the discussion, we define the notation $\mu Q\in\mathbb{R}^\mathcal{X}$ for any $Q\in\mathbb{R}^{\mathcal{X}\times\mathcal{A}}$ as $\mu Q(x)\coloneqq \sum_a \mu(a|x)Q(x,a)$. Abusing the notation a bit, when the context is clear we also use $\mu Q$  to denote a vector in $\mathbb{R}^{\mathcal{X}\times\mathcal{A}}$ with the same value for all actions in a single state $\mu Q(x,a)\coloneqq \mu Q(x)$.

We now introduce the \emph{VA recursion} as a counterpart to the Bellman recursion. For both policy evaluation or control, the VA recursion takes a common form
\begin{align}\tag{VA recursion}
\begin{split}
    V_{t+1} &= \mu \mathcal{T} \left( Q_t - \mu A_t \right), \nonumber \\
    A_{t+1} &= \mathcal{T} \left( Q_t - \mu A_t \right) - V_t,
\end{split}
\end{align} 
with $\mathcal{T}=\mathcal{T}^\pi$ for policy evaluation and $\mathcal{T}=\mathcal{T}^\star$ for control. As we show later, VA-learning is the stochastic approximation to the VA recursion. As a result, the convergence property of VA recursion obviously determines the behavior of VA-learning. We now show that the VA recursion converges to the target Q-function of interest for both policy evaluation and control.

\begin{restatable}{theorem}{theoremconvergence}\label{theorem:convergence} 
(\textbf{Convergence of VA recursion})  For the policy evaluation case, define $V_\mu^\pi(x)\coloneqq \sum_a \mu(a|x)Q^\pi(x,a)$ and $ A_\mu^\pi(x,a)\coloneqq Q^\pi(x,a)-V_\mu^\pi(x)$. Let $C_\mu^\pi=\left\lVert V_0 - V_\mu^\pi \right\rVert_\infty + \left\lVert A_0 - A_\mu^\pi \right\rVert_\infty $ be the initial approximation error. The value and advantage estimates converge geometrically
\begin{align}\tag{policy evaluation}
\begin{split}
    \left\lVert A_t - A_\mu^\pi \right\rVert_\infty &\leq \gamma^{t-1}(1+\gamma)C_\mu^\pi, \nonumber \\ \left\lVert V_t - V_\mu^\pi \right\rVert_\infty &\leq \gamma^t  C_\mu^\pi,
\end{split}
\end{align}
which also implies $\left\lVert Q_t-Q^\pi\right\rVert_\infty=\mathcal{O}(\gamma^t)$. For the control case, we define $V_\mu^\star(x)\coloneqq \sum_a \mu(a|x)Q^\star(x,a)$ and $A_\mu^\star(x,a)\coloneqq Q^\star(x,a)-V_\mu^\star(x)$. Let $C_\mu^\star=\left\lVert V_0 - V_\mu^\star \right\rVert_\infty + \left\lVert A_0 - A_\mu^\star \right\rVert_\infty $ be the initial approximation error. The value and advantage estimates converge geometrically
\begin{align}\tag{optimal control}
\begin{split}
     \left\lVert A_t - A_\mu^\star \right\rVert_\infty &\leq \gamma^{t-1}(1+\gamma)C_\mu^\star, \nonumber \\  \left\lVert V_t - V_\mu^\star \right\rVert_\infty &\leq \gamma^t C_\mu^\star,
\end{split}
\end{align}
which also implies $\left\lVert Q_t-Q^\star\right\rVert_\infty=\mathcal{O}(\gamma^t)$. 
\end{restatable}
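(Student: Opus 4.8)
The plan is to reduce the VA recursion to an ordinary Bellman recursion for a single transformed quantity, after which geometric convergence follows from the standard $\gamma$-contraction property of $\mathcal{T}$ in $\|\cdot\|_\infty$. Guided by the ``understanding the back-up targets'' discussion, I introduce the transformed Q-function $\widetilde{Q}_t \coloneqq Q_t - \mu A_t = V_t + A_t - \mu A_t$, whose advantage part is centered under $\mu$. Rewriting the VA recursion with this notation, the two updates read $V_{t+1} = \mu\mathcal{T}\widetilde{Q}_t$ and $A_{t+1} = \mathcal{T}\widetilde{Q}_t - V_t$, where I treat $V_t$ as an action-independent (broadcast) vector so that $\mu V_t = V_t$.

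The crux is the identity $\widetilde{Q}_{t+1} = \mathcal{T}\widetilde{Q}_t$. To establish it, first apply $\mu$ to the advantage update to get $\mu A_{t+1} = \mu\mathcal{T}\widetilde{Q}_t - V_t = V_{t+1} - V_t$. Substituting into the definition of $\widetilde{Q}_{t+1}$,
\begin{align*}
\widetilde{Q}_{t+1} = V_{t+1} + A_{t+1} - \mu A_{t+1} = V_{t+1} + (\mathcal{T}\widetilde{Q}_t - V_t) - (V_{t+1} - V_t) = \mathcal{T}\widetilde{Q}_t.
\end{align*}
Crucially, this derivation uses only the averaging property $\mu V_t = V_t$ and never the linearity of $\mathcal{T}$, so the same identity holds verbatim for $\mathcal{T} = \mathcal{T}^\pi$ and $\mathcal{T} = \mathcal{T}^\star$. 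I therefore handle both cases uniformly: $\widetilde{Q}_t = \mathcal{T}^t\widetilde{Q}_0$ is exactly the Bellman recursion, whose unique fixed point is $Q^\pi$ (resp.\ $Q^\star$). A quick check confirms that at this fixed point $\mu A = 0$, so $\widetilde{Q} = Q$ collapses to the ordinary Q-function, matching the claimed fixed points $V_\mu^\pi = \mu Q^\pi$ and $A_\mu^\pi = Q^\pi - V_\mu^\pi$.

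With the reduction in hand, geometric convergence of $\widetilde{Q}_t$ is immediate from the $\gamma$-contraction of $\mathcal{T}$: $\|\widetilde{Q}_t - Q^\pi\|_\infty \le \gamma^t\|\widetilde{Q}_0 - Q^\pi\|_\infty$. I then translate this into the stated $V$ and $A$ bounds using the two readouts $V_t = \mu\widetilde{Q}_t$ and $A_t = \widetilde{Q}_t - V_{t-1}$, both of which are read directly off the updates. The value bound follows since $\mu$ is a non-expansion, giving $\|V_t - V_\mu^\pi\|_\infty = \|\mu(\widetilde{Q}_t - Q^\pi)\|_\infty \le \gamma^t\|\widetilde{Q}_0 - Q^\pi\|_\infty$, while the advantage bound follows from the triangle inequality $\|A_t - A_\mu^\pi\|_\infty \le \|\widetilde{Q}_t - Q^\pi\|_\infty + \|V_{t-1} - V_\mu^\pi\|_\infty \le \gamma^{t-1}(1+\gamma)\|\widetilde{Q}_0 - Q^\pi\|_\infty$, which is exactly where the factor $(1+\gamma)$ originates. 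The bound on $\|Q_t - Q^\pi\|_\infty$ then drops out of $Q_t - Q^\pi = (V_t - V_\mu^\pi) + (A_t - A_\mu^\pi)$. The final step is a routine triangle-inequality estimate of the initial error $\|\widetilde{Q}_0 - Q^\pi\|_\infty$ in terms of $\|V_0 - V_\mu^\pi\|_\infty$ and $\|A_0 - A_\mu^\pi\|_\infty$, i.e.\ the constant $C_\mu^\pi$.

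I expect the only genuine obstacle to be discovering and verifying the collapse $\widetilde{Q}_{t+1} = \mathcal{T}\widetilde{Q}_t$; once that identity is in place, everything reduces to the textbook contraction argument applied to $\widetilde{Q}_t$ plus bookkeeping to convert $\widetilde{Q}$-errors into $V$- and $A$-errors. A secondary point to handle with care is confirming that no step relies on $\mathcal{T}$ being linear, since that is precisely what allows a single argument to cover both policy evaluation and control.
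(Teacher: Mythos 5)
Your proposal is correct and follows essentially the same route as the paper's own proof: define $\widetilde{Q}_t = Q_t - \mu A_t$, verify the collapse $\widetilde{Q}_{t+1} = \mathcal{T}\widetilde{Q}_t$ using $\mu V_t = V_t$, then read off the $V$ and $A$ bounds via the non-expansion of $\mu$ and the triangle inequality, with the $(1+\gamma)$ factor arising exactly as you describe. Your explicit remark that the argument never invokes linearity of $\mathcal{T}$, so evaluation and control are covered uniformly, is a point the paper only states implicitly.
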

\begin{proof}
We show a proof sketch for the policy evaluation case, similar result holds for the control case.
Define $\widetilde{Q}_t=Q_t-\mu A_t$. From the definition of VA recursion, a few calculations show
$
    \widetilde{Q}_{k+1} = \mathcal{T}^\pi\widetilde{Q}_t$.
This implies $\widetilde{Q}_t$ converges to $Q^\pi$ at a geometric rate. Next, since $V_{t+1}=\mu\mathcal{T}^\pi\widetilde{Q}_t$,  we have $V_t\rightarrow V_\mu^\pi$. Finally,  $A_{t+1}=\mathcal{T}^\pi\widetilde{Q}_t-V_t$ implies $A_t\rightarrow Q^\pi-V_\mu^\pi$.
\end{proof}
Intriguingly in general, the converged value function and advantage function differs from the target functions $V_\mu^\pi\neq V^\pi, A_\mu^\pi\neq A^\pi$ (similarly for the control case). The converged value function $V_\mu^\pi$  be understood as the value function obtained by following $\mu$ in the first time step and $\pi$ (resp. $\pi^\ast$ for control). Intuitively, this is because the value updates aggregate across all actions according to $\mu$ without off-policy corrections. Nevertheless, the Q-function estimate constructed from the value and advantage estimate $Q_t(x,a)=V_t(x)+A_t(x,a)$ \textit{does converge} to the target Q-function (resp., $Q^\star$ for control).

\paragraph{Convergence of VA-learning from VA recursion.} Since VA-learning is the stochastic approximation to the VA recursion, the convergent behavior of VA recursion implies that VA-learning should converge too. Indeed, by borrowing the arguments from how TD-learning and Q-learning converge as a result of the convergence of Bellman recursion \citep{watkins1989learning,watkins1992q,jaakkola1993convergence,tsitsiklis1994asynchronous}, we can show VA-learning converges to the target fixed points above given regular assumptions on the data process and learning rate. We provide the detailed results in Appendix~\ref{appendix:VA-learning-converge}.

\subsection{VA-learning with function approximation}

VA-learning is readily compatible with function approximations. In general, consider parameterizing the value function $V_\theta$ and advantage function $A_\phi$ with parameters~$\theta$ and~$\phi$. The Q-function can be computed as $Q_{\theta,\phi}(x,a)\coloneqq V_\theta(x)+A_\phi(x,a)$. Let $\theta^-,\phi^-$ be the target network parameter \citep{mnih2013} which is slowly updated towards $\theta$ and $\phi$. Henceforth, we will focus on the policy evaluation case, similar discussions hold for the control case. Given a transition tuple $(x_t,a_t,r_t,x_{t+1})$, we can construct the back-up value and advantage target based on the tabular VA-learning update,
\begin{align}
\begin{split}\label{eq:VA-learning-function} 
   \widehat{V}(x_t) &= \widehat{Q}^\pi(x_t,a_t) - \gamma A_{\phi^-}(x_{t+1},\mu),  \\
   \widehat{A}(x_t,a_t) &= \widehat{Q}^\pi(x_t,a_t) - \gamma A_{\phi^-}(x_{t+1},\mu) - V_{\theta^-}(x_t),\   
\end{split}
\end{align}
where recall that $\widehat{Q}^\pi(x_t,a_t)=r_t + \gamma Q_{\theta^-,\phi^-}(x_{t+1},\pi)$. The VA-learning update rule can be implemented by minimizing the least square loss function $L_\text{VA}(\theta,\phi)$ defined as
\begin{align}
\begin{split}\label{eq:loss} 
   \frac{1}{2}\left(V_\theta(x_t) -  \widehat{V}(x_t)\right)^2 + \frac{1}{2} \left(A_\phi(x_t,a_t) -  \widehat{A}(x_t,a_t)\right)^2.
\end{split}
\end{align}

\begin{algorithm}[t]
\label{algo:bdueling}
\begin{algorithmic}
\STATE Parameterize value and advantage function $Q_{\theta,\phi}(x,a)=V_\theta(x)+A_\phi(x,a)$. Target network $(\theta^-,\phi^-)$.
\FOR{$t=1,2...$}
\STATE \textbf{Step 1.} Sample transition $(x_t,a_t,r_t,x_{t+1})$.
\STATE \textbf{Step 2.} Learn average behavior policy \begin{align*}
    \psi\leftarrow\psi+\eta\nabla_\psi \log \mu_\psi(a_t|x_t).
\end{align*}
\STATE \textbf{Step 3.} Compute targets $\widehat{V}(x_t),\widehat{A}(x_t,a_t)$ based on Eqn~\eqref{eq:VA-learning-function}, and  update online network parameter using gradient based on VA-learning loss function in Eqn~\eqref{eq:loss}:
\begin{align*}
    (\theta,\phi)\leftarrow (\theta,\phi) -  \eta\nabla_{(\theta,\phi)}L_\text{VA}(\theta,\phi).
\end{align*}
\ENDFOR
\STATE  Output the final Q-function $Q_{\theta,\phi}$.
\caption{VA-learning with function approximation}
\end{algorithmic}
\end{algorithm}

When the behavior policy is unknown and we only have access to samples $(x_t,a_t)$, in order to calculate the back-up target defined in Eqn~\eqref{eq:VA-learning-function}, we need a policy $\mu_\psi$ that keeps track of the average behavior $\mu_\psi(a|x)\approx \mathbb{E}\left[\mathbb{I}\left[x_t=a\right]\;\middle|\;x_t=x\right]$. This can be achieved by maximizing the likelihood $\log \mu_\psi(a|x)$ on observed transitions $(x_t,a_t)$. The full VA-learning algorithm with function approximation is shown in Algorithm 2.

\section{Behavior dueling architecture}
\label{sec:b-dueling}

Thus far, we have showed that the VA-learning advantage function estimate $A_t$ converges to $A_\mu^\pi$ for policy evaluation (resp.  $A_\mu^\star$ for control). By definition, such advantage functions satisfy the following \emph{zero-mean} property
\begin{align}
\begin{split} \label{eq:zero-mean-mu}
    A_\mu^\pi(x, \mu) &\coloneqq \sum_a \mu(a|x) A_\mu^\pi(x,a) = 0 \\
    A_\mu^\ast(x,\mu) &\coloneqq \sum_a \mu(a|x) A_\mu^\star(x,a) = 0.
\end{split}
\end{align}
At any finite iteration $t$, the estimate $A_t$ does not necessarily satisfy the above property. Since we know the zero-mean property that the converged value of $A_t$ satisfies, it is tempting to enforce such a property as a constraint on $A_t$, which does not change the fixed point of the update. In the function approximation case, such a zero-mean constraint might be a useful inductive bias for parameterizing the advantage function. For example, we let $f_\phi(x,a)$ be an unconstrained function, and define its average over actions $f_\phi(x,\mu)\coloneqq\sum_a \mu(a|x) f_\phi(x,a)$. We parameterize the zero-mean advantage function as follows
\begin{align}
    A_\phi(x,a)\coloneqq f_\phi(x,a)-f_\phi(x,\mu),\label{eq:behavior-dueling}
\end{align} 
such that $A_\phi(x,\mu)\coloneqq \sum_a \mu(a|x)A_\phi(x,a)=0$.
We call the above parameterization \emph{behavior dueling} due to its close connections to the dueling architecture \citep{wang2016} and the fact that we parameterize the advantage to be zero-mean under the behavior policy. The regular dueling architecture is a special case when $\mu$ is uniform.
The full-fledged Q-learning with behavior dueling algorithm is shown in Algorithm 2. 

\subsection{Connections between VA-learning and Q-learning with behavior dueling}

Our key insight is that VA-learning bears close conceptual connections to regular TD-learning (or Q-learning) with behavior dueling parameterization. With behavior dueling, TD-learning or Q-learning might benefit from the value sharing parameteirzation and the inductive bias for learning advantage functions. To better see the connections, note that the TD-learning algorithm minimizes the least square loss function with respect to the parameterized Q-function $Q_{\theta,\phi}$:
\begin{align}
    L_\text{QL}(\theta,\phi) = \frac{1}{2} \left(Q_{\theta,\phi}(x_t,a_t) - \widehat{Q}^\pi(x_t,a_t)\right)^2,\label{eq:bdueling-q}
\end{align}
where $\widehat{Q}^\pi(x_t,a_t) = r_t + \gamma Q_{\theta^-,\phi^-}(x_{t+1},\pi)$ is the one-step back-up target.
With behavior dueling, $Q_{\theta,\phi}(x,a)=V_\theta(x)+A_\phi(x,a)$ and $A_\phi(x,a)=f_\phi(x,a)-f_\phi(x,\mu)$. We examine the gradient of Q-learning loss function $L_\text{QL}(\theta,\phi)$ with respect to the value parameter $\nabla_\theta L_\text{QL}(\theta,\phi)$ is 
\begin{align*}
    \left(V_\theta(x_t) - \left(\widehat{Q}^\pi(x_t,a_t) - A_\phi(x_t,a_t) \right)\right) \nabla_\theta V_\theta(x_t).
\end{align*}
We can interpret the gradient for value parameter $\theta$ as updating the value function $V_\theta$ so as to better fit the value function target $ \widehat{Q}^\pi(x_t,a_t) - A_\phi(x_t,a_t)$. This echos with the value updates in VA-learning that which aim to fit a value function target (see Section~\ref{sec:VA-learning}). 

Regarding the advantage updates, there is a subtle difference between the advantage updates of VA-learning vs. behavior dueling. In a nutshell, this is because VA-learning carries out separate updates for each advantage function $A(x,a)$. In contrast, behavior dueling couples advantage updates for different actions due to the parameterization $A(x,a)=f(x,a)-f(x,\mu)$; as a result, when action $a\neq b$ is taken, the advantage function $f(x,b)$ is updated as well. Despite the difference, both updates bear the interpretations of fitting the advantage components of the Q-function. Such interpretations imply that the motivational example (Section~\ref{sec:VA-learning-why}) which illustrates that the benefits of VA-learning should intuitively apply to behavior dueling as well, as we will verify empirically. We present a more complete discussion of such connections between VA-learning and behavior dueling in Appendix~\ref{appendix:why-bdueling}.

\begin{algorithm}[t]
\label{algo:bdueling}
\begin{algorithmic}
\STATE Behavior dueling  $Q_{\theta,\phi}(x,a)=V_\theta(x)+A_\phi(x,a)$ with parameterization $A_\phi(x,a)=f_\phi(x,a)-f_\phi(x,\mu_\psi)$. Target network $(\theta^-,\phi^-)$.
\FOR{$t=1,2...$}
\STATE \textbf{Step 1.} Sample transition $(x_t,a_t,r_t,x_{t+1})$.
\STATE \textbf{Step 2.} Learn average behavior policy \begin{align*}
    \psi\leftarrow\psi+\eta\nabla_\psi \log \mu_\psi(a_t|x_t).
\end{align*}
\STATE \textbf{Step 3.} Compute back-up target
\begin{align*}
    \widehat{Q}^\pi(x_t,a_t)= r_t + \gamma Q_{\theta^-,\phi^-}(x_{t+1},\pi).
\end{align*}
and update online network parameter using gradient based on Eqn~\eqref{eq:bdueling-q}: $(\theta,\phi)\leftarrow (\theta,\phi) -  \eta\nabla_{(\theta,\phi)}L_\text{QL}(\theta,\phi)$.
\ENDFOR
\STATE  Output the final Q-function $Q_{\theta,\phi}$.
\caption{Q-learning with behavior dueling}
\end{algorithmic}
\end{algorithm}

\subsection{Why behavior dueling is better than dueling}
We can understand the dueling architecture \citep{wang2016} as a special case of behavior dueling assuming $\mu$ is uniform. Such an implicit assumption can be useful when $\mu$ is indeed close to uniform, so that there is no need to parameterize an additional behavior policy $\mu_\psi$ to learn.
However, when the behavior policy deviates from the uniform policy, learning $\mu_\psi\approx \mu$ seems critical to improved performance. In a few practical setups, we usually find behavior dueling to outperform uniform dueling, as we will demonstrate in both tabular and some large-scale deep RL settings.

In light of the discussion in Section~\ref{sec:VA-learning-why}, both behavior and uniform dueling entail sharing information across actions, so why does the former perform better? We hypothesize that this is because behavior dueling entails a better value sharing between actions, as it is adapted to the behavior policy. Consider the dueling parameterization $A_\nu(x,a)=f(x,a)-f(x,\nu)$ with distribution $\nu$. We are interested in minmizing unshared information $A_\mu(x,a)$ across actions, as characterized by the squared norm
\begin{align*}
    \min_\nu \sum_a \mu(a|x) A_\nu(x,a)^2
\end{align*}
In general, the minimizing distribution is $\nu=\mu$, i.e., the behavior policy. Since behavior dueling at $\nu=\mu$ minimizes the unshared components of Q-functions, it can be interpreted as maximizing the shared components, leading to faster downstream learning.
We provide a more precise argument in Appendix~\ref{appendix:why-bdueling} with experiment ablations.

\section{Discussion of prior work}\label{sec:discuss}

We discuss the relation between VA-learning and a few lines of related work in RL.

\paragraph{Advantage learning.} Despite the  similarity in names, VA-learning differs from advantage learning \citep{baird1993advantage,baird1995residual} in critical ways. In a nutshell, VA-learning still aims to learn the original Q-function $Q^\pi$ (or $Q^\ast$ in the control case), whereas advantage learning learns to increase the value gaps between actions. Specifically, given a transition $(x_t,a_t,r_t,x_{t+1})$, advantage learning for optimal control can be understood as the following back-up target for $Q_t(x_t,a_t)$ \citep{bellemare2016increasing,kozuno2019gap}
\begin{align*}
    \widehat{\mathcal{T}}^\pi Q_t(x_t,a_t) + \beta \left(Q_t(x_t,a_t) - \sum_a \pi(a_t|x_t) Q_t(x_t,a_t)\right)
\end{align*}
where $\pi$ is the greedy policy for the control case.
The above back-up operation is gap-increasing, in that it enlarges the difference between converged Q-functions at different actions. For example, in the policy evaluation case the Q-function converges to $V^\pi+\frac{1}{1-\beta}A^\pi$. As $\beta\rightarrow 1$, the Q-function gap between actions increases. 

Compared to gap-increasing operators, a subtle technical difference is that VA-learning constructs the back-up target by subtracting the average Q-function under behavior policy $\mu$ instead of target policy $\pi$; and at the next state $x_{t+1}$, instead of the current state $x_t$. This ensures that VA-learning still retains $Q^\pi$ as the fixed point. An interesting future direction would be to combine VA-learning with the gap-increasing learning.

\paragraph{Direct advantage learning.} With a similar motivation as VA-learning, \citet{pan2021direct} proposed to learn advantage functions directly from Monte-Carlo returns, based on the variance-minimization property of advantage function. Their approach is thus far constrained to the on-policy case, and does not allow for bootstrapping out-of-the-box. An interesting direction would be to study the combination of such an approach with VA-learning.

\paragraph{RL with over-parameterized linear function approximation.} The tabular dueling parameterization can be understood as a special case of over-parameterized linear function approximation \citep{sutton1998}. Here, \emph{over-parameterized} refers to the fact that dueling introduces an extra degree of freedom to learning Q-functions. We have demonstrated empirically that this extra degree of freedom allows for value sharing across actions, and usually helps speed up convergence. Interesting open questions include the study of off-policy stability of dueling parameterization.

\section{Experiments}\label{sec:exp}

We start with experiments on tabular MDPs, to understand the improved sample efficiency of VA-learning over Q-learning. Then we evaluate the impacts of VA-learning and behavior dueling in deep RL settings.

\subsection{Tabular MDP experiments}

In Figure~\ref{fig:tabular-eps}, we compare four algorithmic variants with behavior policy $\mu$ defined as $\mu=\epsilon u + (1-\epsilon)\pi_\text{det}$ on a family of randomly generated tabular MDPs. Here, $\pi_\text{det}$ is a randomly sampled deterministic policy, $u$ is the uniform policy and $\epsilon \in [0,1]$ is the mixing coefficient. We calculate the final performance of each algorithm until convergence, and show the mean and standard deviation across $20$ independent runs. See Appendix~\ref{appendix:exp} for more details on the MDP details.

For a wide range of values of $\epsilon$, VA-learning and Q-learning with behavior dueling outperform other baselines significantly. The performance gap seems the most profound when $\epsilon\approx 0$ as $\mu$ deviates the most from uniform. In this case, we speculate that since the behavior dueling architecture makes an inaccurate implicit assumption on the behavior policy, Q-learning with uniform dueling perform poorly as regular Q-learning. As $\epsilon$ increases, the performance gap decreases. When $\epsilon\rightarrow 1$ and $\mu$ is close to a uniform policy, uniform dueling catches up with the two new algorithms. However, there is still a statistically significant gap between regular Q-learning and other algorithms, implying a consistent benefit of VA-learning and its derived Q-learning variants (behavior dueling) over regular Q-learning.

\begin{figure}[t]
    \centering
    \includegraphics[keepaspectratio,width=.42\textwidth]{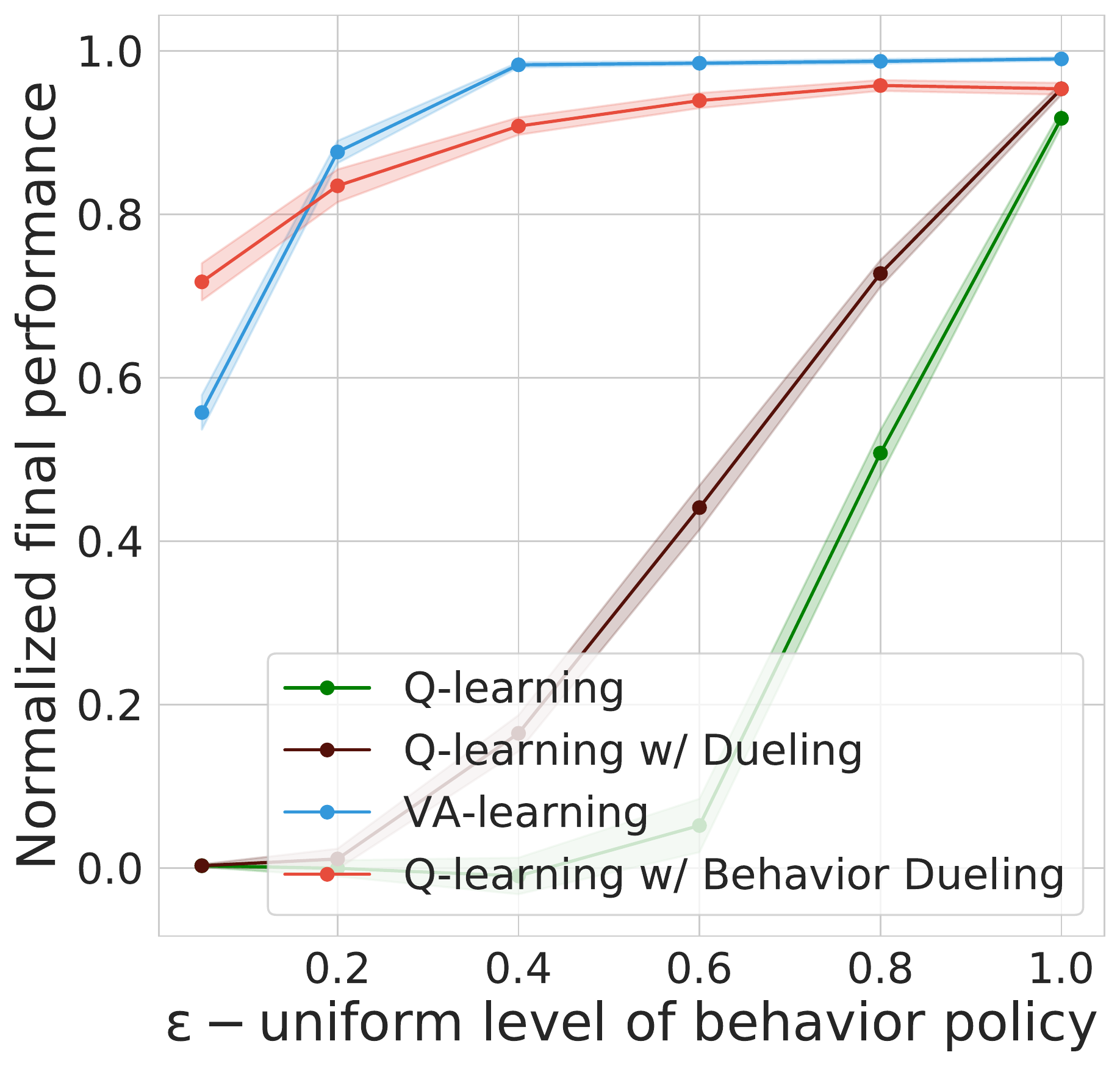}
    \caption{Comparing different algorithmic variants in tabular MDPs with fixed behavior policy $\mu=\epsilon u + (1-\epsilon)\pi_\text{det}$ for some randomly sampled and fixed deterministic policy $\pi_\text{det}$, uniform policy $u$ and varying degree of $\epsilon$ ($x$-axis). As $\epsilon\rightarrow 1$ and $\mu$ approaches uniform, Q-learning with dueling architecture catches up in performance with behavior dueling and VA-learning.}
    \label{fig:tabular-eps}
\end{figure}

    \label{fig:tabular-sample}

\subsection{Deep reinforcement learning experiments}

We now evaluate the effects of VA-learning and its variants in large-scale deep RL environments. We use the DQN agent \citep{mnih2013} as the baseline agent and use the Atari 57 game suite as the test bed \citep{bellemare2013arcade}. Throughout, we report the interquartile mean (IQM) score across multiple random seeds for all algorithmic variants that train for $200$M frames \citep{agarwal2021deep}. We compare VA-learning, behavior dueling, dueling \citep{wang2016} and baseline Q-learning. All variants share the same architecture and hyper-parameters wherever possible. 

The behavior policy $\mu$ is $\epsilon$-greedy with respect to the Q-function network $Q_{\theta,\phi}$. Since both the exploration rate $\epsilon$ and Q-function  $ Q_{\theta,\phi}$ slowly changes over time, the behavior policy $\mu$ changes too. VA-learning and behavior dueling trains an additional average behavior policy $\mu_\psi(a|x)$ to approximate the average behavior policy across the entire training history. By default, to improve performance, the baseline Q-learning agent implements $n$-step bootstrapping and computes back-up targets based on partial trajectories of length $n$. VA-learning can be easily adapted accordingly, see Appendix~\ref{appendix:exp} for more details.

\begin{figure}[t]
    \centering
    \includegraphics[keepaspectratio,width=.42\textwidth]{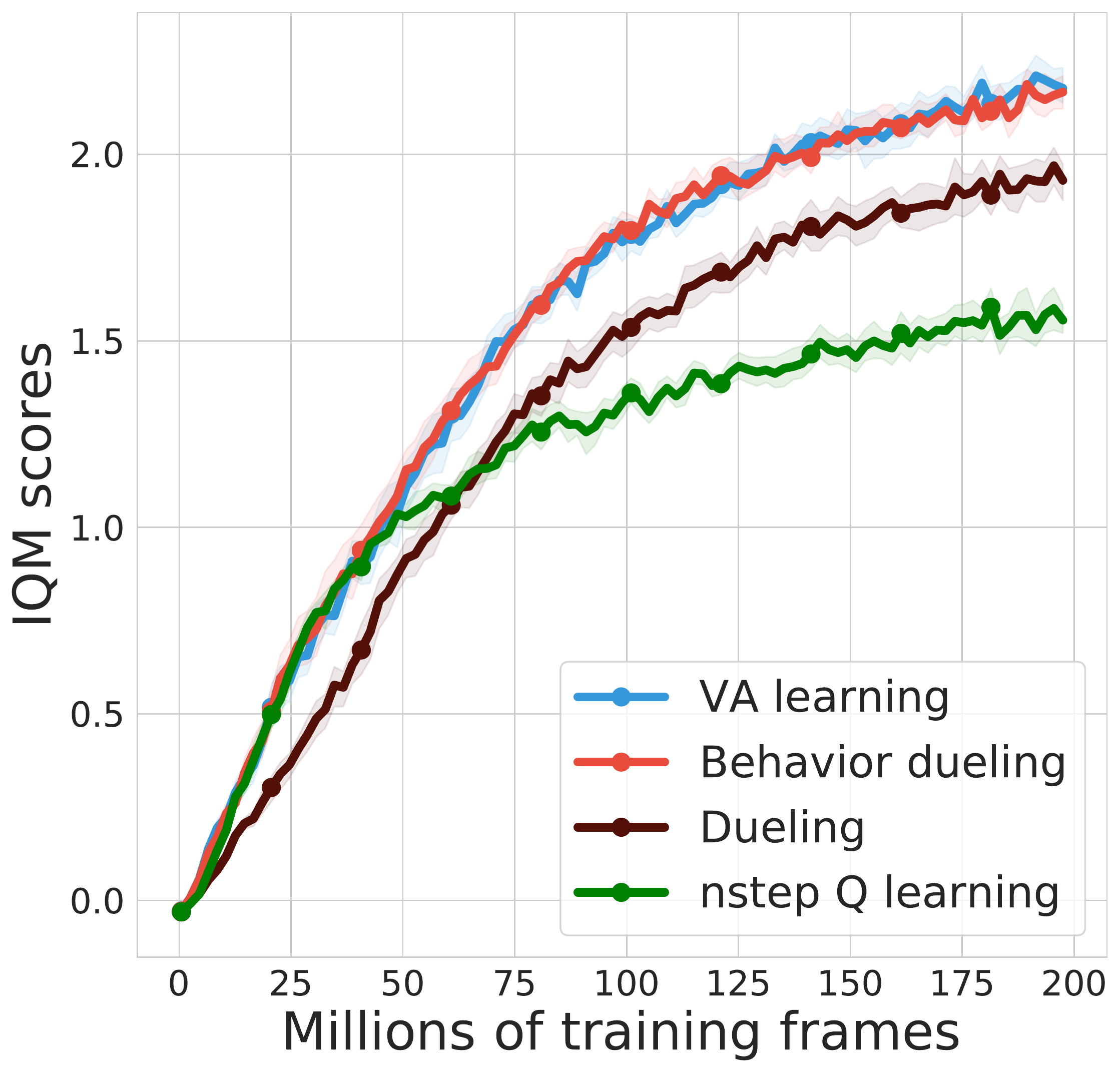}
    \caption{Comparing algorithmic variants implemented with full Atari action set. VA-learning and behavior dueling are significantly better than the uniform dueling architecture, which further improves over the $n$-step Q-learning baseline. Compared to the standard Atari setup in Figure~\ref{fig:epsilon-dqn}(b), the performance of VA-learning and behavior dueling does not degrade.}
    \label{fig:allaction-dqn}
\end{figure}

\paragraph{Network architecture.} The baseline DQN agent network consists of a \emph{torso} convolutional network which processes the input image $x$ into an embedding $g(x)$, and a \emph{head} MLP network which takes the embedding and outputs the Q-function $Q_\theta(g(x),a)$. The dueling architecture parameterizes a separate value \emph{head} network $V_\theta\left(g(x)\right)$ and advantage \emph{head} network $A_\phi(g(x),a)$. In behavior dueling and VA-learning, the behavior policy is parameterized as a policy \emph{head} network that outputs a distribution over actions $\mu_\psi\left(a|g(x)\right)$. Throughout  experiments, we design the policy head to share the same torso as the other value heads, but its gradient does not update the torso parameters. This design choice ensures that the loss function for learning average behavior policy does not shape the embedding. Hence, any resulting empirical gains   can be more convincingly attributed to the improvements of VA-learning over Q-learning. See Appendix~\ref{appendix:exp} for more comprehensive details.

\paragraph{Full action set Atari.}
We focus on a variant of the Atari game suite with \emph{full action set}, where the agent has access to a total of $|\mathcal{A}|=18$ actions including potentially many actions with no effect. Thus far by default, agents are trained with the restricted action set which makes learning  easier (e.g., for Pong the reduced action set has $|\mathcal{A}|=3$ actions). 

In Figure~\ref{fig:allaction-dqn}, we compare DQN agent variants with the full action set. Almost all algorithms can reach a similar asymptotic performance as with the restricted action set (Figure~\ref{fig:epsilon-dqn}(b)) but the learning speed is generally slowed down. VA-learning and behavior dueling are the least impacted by the increased action set. The dueling architecture slows down more significantly, with the performance margins against VA-learning enlarged over time. As discussed in Section~\ref{sec:b-dueling}, the dueling architecture can be understood as imposing an implicit uniform assumption on the behavior policy. When the action space is large, such an assumption is more easily violated as the agent is much more likely to take certain actions than others over time. Such a comparison highlights the practical importance of using the behavior policy to carry out the average of advantage function, which is the design principle of VA-learning.

For the restricted Atari game setting where for each game only a small subset of full action sets is provided to the agent, we observe that behavior dueling and VA-learning also deliver improvements over dueling and Q-learning baselines. See Appendix~\ref{appendix:exp} for more results and ablation study in the deep RL setting.

\section{Conclusion}
In this work, we have developed VA-learning  as an alternative value-based RL algorithm to the classic Q-learning. 
We have discussed a few important theoretical aspects of VA-learning, and how it can be implemented with function approximations. With the extra degree of freedom in place, VA-learning aims to learn a value function and advantage function that is adapted to the behavior policy. Compared to Q-learning, VA-learning makes more efficient use of finite samples and enjoys better empirical performance in both tabular and deep RL settings. VA-learning also inspires the behavior dueling architecture, which generalizes dueling as a special case, and potentially explains why such a seemingly simple architecture change helps improve DQN.

\paragraph{Acknowledgements.} We especially thank Bruno Scherrer for providing valuable feedback that identifies typos in the proof of a previous version of the paper.

\bibliography{main}
\bibliographystyle{plainnat}

\newpage
\onecolumn

\begin{appendix}

\section*{\centering APPENDICES: VA-learning as a more efficient alternative to Q-learning}

\section{Proof of theoretical results}
\label{appendix:proof}

Taking policy evaluation as an example, we first show that the VA-learning update
\begin{align*}
\begin{split}
    V_{t+1}(x_t) &\overset{\alpha_t}{\leftarrow} \widehat{\mathcal{T}}^\pi Q_t(X,A) - A_t(x_{t+1},\mu), \nonumber \\
    A_{t+1}(x_t,a_t) &\overset{\alpha_t}{\leftarrow} \widehat{\mathcal{T}}^\pi Q_t(X,A) - A_t(x_{t+1},\mu) - V_t(x_t). 
\end{split}
\end{align*}
is reduced to the above VA recursion in expectation,
\begin{align*}
\begin{split}
    V_{t+1} &= \mu \mathcal{T}^\pi \left( Q_t - \mu A_t \right), \nonumber \\
    A_{t+1} &= \mathcal{T}^\pi \left( Q_t - \mu A_t \right) - V_t.
\end{split}
\end{align*} 
We first consider the value function update. Conditional on the initial state $X$, we take an expectation over the random variables
\begin{align*}
    a_t \sim \mu (\cdot|x_t), r_t \sim P_R(\cdot|x_t,a_t), x_{t+1}\sim P(\cdot|x_t,a_t).
\end{align*}
This leads to
\begin{align*}
    \mathbb{E}\left[\widehat{\mathcal{T}}^\pi Q_t(x_t,a_t) - A_t(x_{t+1},\mu) \;\middle|\; x_t\right] = \sum_a \mu(a|x_t) \mathcal{T}^\pi\tilde{Q}_t(x_t,a),
\end{align*}
where $\tilde{Q}_t\coloneqq Q_t-\mu A_t$.
In our notation, this is equivalent to $\mu  \mathcal{T}^\pi(X)$. This means the value function update in expectation is indeed $V_{t+1}(x)= \sum_a \mu(a|x) \mathcal{T}^\pi\tilde{Q}_t(x,a)$. With the same set of argument, we can show the case for the advantage function update too.

\theoremconvergence*
\begin{proof}
We first examine the policy evaluation case.
Define $\widetilde{Q}_t=Q_t-\mu A_t$. From the definition of VA recursion, we have
\begin{align*}
    \widetilde{Q}_{t+1} &= V_{t+1}+A_{t+1}-\mu A_{t+1} \\ 
    &= \mu \mathcal{T}^\pi(Q_t-\mu A_t) + \mathcal{T}^\pi(Q_t-\mu A_t) - V_t - \mu \left(\mathcal{T}^\pi(Q_t-\mu A_t) - V_t\right) \\
    &= \mathcal{T}^\pi(Q_t-\mu A_t)  \\
    &= \mathcal{T}^\pi \widetilde{Q}_t.
\end{align*}
where we have exploited the fact that $\mu V_t=V_t$. 
The above equality implies $\widetilde{Q}_t$  converges to $Q^\pi$ at a geometric rate, since the operator $\mathcal{T}^\pi$ has $Q^\pi$ as the unique fixed point and is $\gamma$-contractive. Formally, we have
\begin{align*}
    \left\lVert \widetilde{Q}_t - Q^\pi \right\rVert_\infty \leq \gamma^t \left\lVert \widetilde{Q}_0 - Q^\pi \right\rVert_\infty \leq_{(a)} \gamma^t \underbrace{\left(\left\lVert A_0 - A_\mu^\pi \right\rVert_\infty + \left\lVert  V_0 - V_\mu^\pi \right\rVert_\infty \right)}_{\eqqcolon C_\mu^\pi}.
\end{align*}
Here, (a) follows from the application of triangle inequality and the fact that $Q^\pi = A_\mu^\pi + V_\mu^\pi$. Now, we can write
\begin{align*}
    \left\lVert V_t - V_\mu^\pi\right\rVert_\infty =  \left\lVert \mu \mathcal{T}^\pi \widetilde{Q}_{k-1} - V_\mu^\pi\right\rVert_\infty = \left\lVert \mu \widetilde{Q}_{k} - V_\mu^\pi\right\rVert_\infty \leq \gamma^t C_\mu^\pi.
\end{align*}
Finally, we consider the advantage function. 
\begin{align*}
    \left\lVert A_t - A_\mu^\pi \right\rVert_\infty = \left\lVert \mathcal{T}^\pi \widetilde{Q}_{k-1} - V_t - A_\mu^\pi \right\rVert_\infty &= \left\lVert \widetilde{Q}_t - V_{k-1} - A_\mu^\pi \right\rVert_\infty \\
    &\leq_{(a)} \left\lVert \widetilde{Q}_t - Q^\pi \right\rVert_\infty + \left\lVert V_{k-1} - V_\mu^\pi \right\rVert_\infty \\
    &\leq \gamma^{t-1}(1+\gamma) C_\mu^\pi,
\end{align*}
where (a) follows from the application of triangle inequality.
This concludes the proof for policy evaluation. For optimal control, the same set of argument applies thanks to the fact that $\mathcal{T}$ is $\gamma$-contractive with $Q^\star$ as the unique fixed point.
\end{proof}

\section{Convergence of VA-learning}
\label{appendix:VA-learning-converge}

We now present results on the convergence of VA-learning under stochastic approximations. Recall that upon observing the sample $(x_t,a_t,r_t,x_{t+1})$, VA-learning carries out the following update ,
\begin{align*}
\begin{split}
    V_{t+1}(x_t) &\overset{\alpha_t}{\leftarrow} \widehat{\mathcal{T}} Q_t(X,A) - A_t(x_{t+1},\mu), \nonumber \\
    A_{t+1}(x_t,a_t) &\overset{\alpha_t}{\leftarrow} \widehat{\mathcal{T}} Q_t(X,A) - A_t(x_{t+1},\mu) - V_t(x_t),
\end{split}
\end{align*}
where $\widehat{\mathcal{T}} Q_t(X,A)$ is the one-sample stochastic approximation to $\mathcal{T}Q_t(X,A)$ for optimal control and $\mathcal{T}^\pi Q_t(X,A)$ for policy evaluation.
We consider a more restrictive setup, where from each state $x$, we sample action $A_x\sim \mu(\cdot|x)$, and observe the corresponding immediate reward $R_x\sim P_R(\cdot|x,A)$ and next state transition $X_x\sim P(\cdot|x,A)$, where the subscripts are meant to distinguish between samples from different state $x\in\mathcal{X}$. The update is carried out across all states simultaneously, $\forall x\in\mathcal{X}$,
\begin{align}
\begin{split}\label{eq:VA-learning-sa}
    V_{t+1}(x_t) &\overset{\alpha_t}{\leftarrow} \widehat{\mathcal{T}} Q_t(x_t,a_t) - A_t(x_{t+1},a_x^\mu),  \\
    A_{t+1}(x_t,a_t) &\overset{\alpha_t}{\leftarrow} \widehat{\mathcal{T}} Q_t(x_t,a_t) - A_t(x_{t+1},a_x^\mu) - V_t(x_t).
\end{split}
\end{align}
The formal results are stated as follows.
\begin{restatable}{theorem}{theoremconvergenceva}\label{theorem:convergenceva} 
(\textbf{Convergence of VA-learning}) Under the assumption $\sum_{t=0}^\infty \alpha_t=\infty$ and $\sum_{t=0}^\infty \alpha_t^2 \leq C < \infty$ where $C$ is some finite constant, then the above update in Eqn~\eqref{eq:VA-learning-sa} leads to almost sure convergence of the iterates. Concretely, 
\begin{align*}
    V_t(x)\rightarrow V_\mu^\pi, A_t(x)\rightarrow A_\mu^\pi,\forall (x,a)\in\mathcal{X}\times\mathcal{A}
\end{align*}
almost surely for policy evaluation and 
\begin{align*}
    V_t(x)\rightarrow V_\mu^\star, A_t(x,a)\rightarrow A_\mu^\star,\forall (x,a)\in\mathcal{X}\times\mathcal{A}
\end{align*}
for optimal control.
\end{restatable}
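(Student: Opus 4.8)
The plan is to reduce the stochastic VA-learning recursion in Eqn~\eqref{eq:VA-learning-sa} to a known-convergent stochastic approximation by passing to the transformed iterate $\widetilde{Q}_t \coloneqq Q_t - \mu A_t$, which the proof of Theorem~\ref{theorem:convergence} already identified as the object obeying a genuine Bellman contraction. First I would verify that, conditioned on the filtration $\mathcal{F}_t$ generated by the iterates and samples up to time $t$, the one-sample update targets are unbiased estimates of the VA-recursion operator: the expected target driving $V_{t+1}$ equals $\mu\mathcal{T}^\pi\widetilde{Q}_t$ (resp.\ $\mu\mathcal{T}^\star\widetilde{Q}_t$) and the expected target driving $A_{t+1}(x,a)$ equals $\mathcal{T}^\pi\widetilde{Q}_t(x,a) - V_t(x)$, exactly as in the deterministic recursion. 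This is the same calculation already carried out at the start of Appendix~\ref{appendix:proof}, now read as a statement about conditional expectations of the stochastic increments.

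The central step is to establish $\widetilde{Q}_t \to Q^\pi$ (resp.\ $Q^\star$) almost surely. Writing each VA update as an incremental iteration $p_{t+1} = (1-\alpha_t)p_t + \alpha_t \mathcal{H}_t$, I would cast the induced recursion for $\widetilde{Q}_t$ into the standard template of the asynchronous stochastic-approximation theorems for contraction mappings \citep{jaakkola1993convergence,tsitsiklis1994asynchronous}. The three hypotheses to check are: (i) the Robbins--Monro conditions $\sum_t \alpha_t = \infty$ and $\sum_t \alpha_t^2 < \infty$, which are assumed, together with the fact that full coverage $\mu(a|x)>0$ guarantees each $(x,a)$ is updated infinitely often; (ii) the conditional-mean operator is the $\gamma$-contraction $\mathcal{T}^\pi$ (resp.\ $\mathcal{T}^\star$) with unique fixed point $Q^\pi$ (resp.\ $Q^\star$), which follows from the unbiasedness computation above and the contraction property recalled in Section~\ref{sec:VA-learning-conv}; and (iii) the martingale-difference noise has conditional variance bounded by $C(1+\lVert\widetilde{Q}_t - Q^\pi\rVert_\infty^2)$, which follows because rewards are bounded, $\gamma<1$, and both $Q_t(x_{t+1},\pi)$ and $\max_a Q_t(x_{t+1},a)$ are non-expansive in the iterates. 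Boundedness of the iterates, needed to invoke these theorems, is inherited from the contraction once the noise bound is in place.

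Having obtained $\widetilde{Q}_t \to Q^\pi$ almost surely, I would transfer convergence back to the individual iterates using the ordering of the coupled system. Since $V_{t+1}$ is an incremental average of targets whose conditional mean is $\mu\mathcal{T}^\pi\widetilde{Q}_t = \mu\widetilde{Q}_{t+1}$, and $\mu\widetilde{Q}_t \to \mu Q^\pi = V_\mu^\pi$, the $V$-iterate is itself a stochastic approximation tracking a convergent target, so $V_t \to V_\mu^\pi$ almost surely. Likewise $A_{t+1}(x,a)$ tracks $\mathcal{T}^\pi\widetilde{Q}_t(x,a) - V_t(x)$, whose limit is $Q^\pi(x,a) - V_\mu^\pi(x) = A_\mu^\pi(x,a)$, giving $A_t \to A_\mu^\pi$; note this pins down the \emph{uncentered} advantage, because each $A(x,a)$ is updated toward its own action-dependent target rather than only its behavior-average. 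The control case is identical after replacing $\mathcal{T}^\pi$ by $\mathcal{T}^\star$ and using that the $\max$ operator is a non-expansion, so neither the contraction constant nor the variance bound is affected.

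The main obstacle I expect is condition (iii) together with the coupling. The advantage components are updated \emph{asynchronously} across actions (only the sampled action $a_t$ is touched per visit) while the $V$-component and the averaging term $\mu A_t$ mix all actions, so one must verify carefully that the transformed recursion for $\widetilde{Q}_t$ genuinely fits the asynchronous SA template --- concretely, that the residual entering $\widetilde{Q}_t$ remains a martingale difference with the required conditional-variance growth despite the heterogeneous index sets. The second delicate point is closing the ``drive-then-track'' decomposition: because $\widetilde{Q}_t$, $V_t$ and $A_t$ are mutually coupled, the convergence of the driving process $\widetilde{Q}_t$ must be established before, or simultaneously with, the tracking analysis of $V_t$ and $A_t$, which I would make rigorous either by appealing to the almost-sure convergence of $\widetilde{Q}_t$ as an exogenous input or through a Lyapunov / input-to-state stability argument on the joint system.
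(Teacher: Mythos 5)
First, note that the paper does not actually supply a proof of this theorem: Appendix~\ref{appendix:VA-learning-converge} only asserts that the result is ``a straightforward extension'' of the classic stochastic-approximation arguments for Q-learning, so your plan is being compared against an intended route rather than a written argument. Your overall route --- verify that the sampled targets are conditionally unbiased for the VA recursion, invoke the asynchronous stochastic-approximation theorems for contractions to get convergence of a driving process, then recover $V_t\to V_\mu^\pi$ and $A_t\to A_\mu^\pi$ by a drive-then-track argument --- is exactly the route the authors intend, and your unbiasedness step and final transfer step are sound.

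The gap is at your load-bearing step: the claim that the transformed iterate $\widetilde{Q}_t=Q_t-\mu A_t$ ``genuinely fits the asynchronous SA template'' with conditional-mean operator $\mathcal{T}$. The identity $\widetilde{Q}_{t+1}=\mathcal{T}\widetilde{Q}_t$ in the proof of Theorem~\ref{theorem:convergence} relies on \emph{full replacement}: $V_{t+1}$, $A_{t+1}$ and $\mu A_{t+1}$ are all overwritten simultaneously, which is what makes the $V_t$ and averaging terms cancel. In the incremental update of Eqn~\eqref{eq:VA-learning-sa}, $V_t(x)$ moves by a partial step $\alpha_t$, only the sampled action's advantage $A_t(x,A_x)$ moves, and the advantages of unsampled actions are frozen while the average $\mu A_t(x)$ still shifts. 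Writing $\delta_t(x,b)\coloneqq\mathcal{T}\widetilde{Q}_t(x,b)-Q_t(x,b)$, a direct computation gives
\begin{align*}
\mathbb{E}\left[\widetilde{Q}_{t+1}(x,a)-\widetilde{Q}_t(x,a)\;\middle|\;V_t,A_t\right]
=\alpha_t\Bigl(\mu\mathcal{T}\widetilde{Q}_t(x)-V_t(x)+\mu(a|x)\,\delta_t(x,a)-\sum_b\mu(b|x)^2\,\delta_t(x,b)\Bigr),
\end{align*}
a $\mu$-weighted mixture of Bellman residuals over \emph{all} actions rather than $\alpha_t\bigl(\mathcal{T}\widetilde{Q}_t(x,a)-\widetilde{Q}_t(x,a)\bigr)$. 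So hypothesis (ii) of your template fails for $\widetilde{Q}_t$, and you cannot conclude $\widetilde{Q}_t\to Q^\pi$ by quoting the Q-learning convergence theorem for this iterate. What is actually required is a convergence argument for the coupled pair $(V_t,A_t)$, whose expected update is a damped VA recursion with heterogeneous effective step sizes ($\alpha_t$ for $V(x)$ versus $\alpha_t\mu(a|x)$ for $A(x,a)$); since Theorem~\ref{theorem:convergence} shows the full-replacement map is only eventually contracting in the product sup-norm (witness the $\gamma^{t-1}(1+\gamma)$ bound, which exceeds the initial error at $t=1$), one must either exhibit a weighted norm in which the joint expected update is a pseudo-contraction or run a genuine two-timescale or Lyapunov analysis on the coupled system. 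You correctly flag this coupling as ``the main obstacle,'' but the proposal leaves it unresolved, and it is precisely where the proof content lies.
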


The proof is a straightforward extension of classic proof technique to show the stochastic approximation convergence of Q-learning and TD-learning \citep{watkins1992q}.

\section{Experiment details and extra results}
\label{appendix:exp}

We provide further details on the tabular and deep RL experiments in the main paper.

\subsection{Tabular experiments}

All tabular experiments in the paper are carried out on randomly generated MDPs with $|\mathcal{X}|=20$ states, $|\mathcal{A}|=5$ actions and discount factor $\gamma=0.99$. The transition matrix $p(\cdot|x,a)$ is generated from a Dirichlet distribution with parameter $(\alpha,\alpha...\alpha)$ for $\alpha=0.5$. For the control case, the behavior policy is fixed and constructed as $\mu=\epsilon u + (1-\epsilon)\pi_\text{det}$ for some randomly sampled and fixed deterministic policy $\pi_\text{det}$, uniform policy $u$ and $\epsilon\in[0,1]$. By adjusting $\epsilon$, we can assess different algorithms' robustness to the level of stochasticity in the behavior policy. In the policy evaluation case, $\mu$ is set to be uniform and equivalently $\epsilon=1$. The trajectories are collected starting from the same state $x=0$ and under behavior policy $\mu$. Trajectories are truncated at length $T=\text{int}\left(\frac{2}{1-\gamma}\right)$ where $\text{int}(x)$ denotes the closest integer to $x$. By default, $N=20$ are collected for each experiment.

In the control case, the performance is calculated as $Q^{\pi_t}$ where $\pi_t$ is the greedy policy with learned Q-function $Q_t$. Recall that for VA-learning, $Q_t(X,A)=V_t(x)+A_t(x,a)$. In plots, we show the average value of $Q^{\pi_t}$ uniformly across all states. In the policy evaluation case, we calculate $\left\lVert Q_t - Q^\pi\right\rVert_2$ where $\pi$ is the randomly chosen deterministic policy. 

Figure~\ref{fig:tabular}, we demonstrate how VA-learning improves over Q-learning in the policy evaluation case, by measuring the advantage estimation error $\left\lVert \hat{A}_t-A^\pi\right\rVert_2$ over time. For VA-learning, $\hat{A}_t=A_t$; for Q-learning, $\hat{A}_t=Q_t-\pi Q_t$.

\paragraph{Gradient descent updates.} Throughout tabular experiments, we implement updates as gradient descents on a certain properly defined loss functions. We always adopt tabular parameterizations of $Q_t,V_t$ and $a_t$. For Q-learning, the loss function is implemented as in $L_\text{QL}$; for the dueling architecture, the loss function is the same as Q-learning but with dueling parametrization on $Q_t(X,A)$; for VA-learning, the loss function is implemented as in $L_\text{VA}$. At each update, the gradient is averaged across all collected trajectories so as to avoid additional randomness in the update. The learning rate is set as a constant $\alpha_t=0.1$. The target parameter is copied to be the online parameter every $\tau=10$ updates.

\paragraph{Extra results on Q-function error.}

As complementary results to Figure~\ref{fig:tabular}, we demonstrate how VA-learning improves over Q-learning in the policy evaluation case in Figure~\ref{fig:tabular-adv}(b). We measure the Q-function estimation error $\left\lVert \hat{Q}_t-Q^\pi\right\rVert_2$ over time. For VA-learning, $\hat{Q}_t=A_t+V_t$; for Q-learning, $\hat{Q}_t=Q_t$. Notably, VA-learning achieves a slightly faster decaying rate of the approximation error compared to Q-learning. 

\begin{figure}[t]
    \centering
    \subfigure[Advantage error]{\includegraphics[keepaspectratio,width=.42\textwidth]{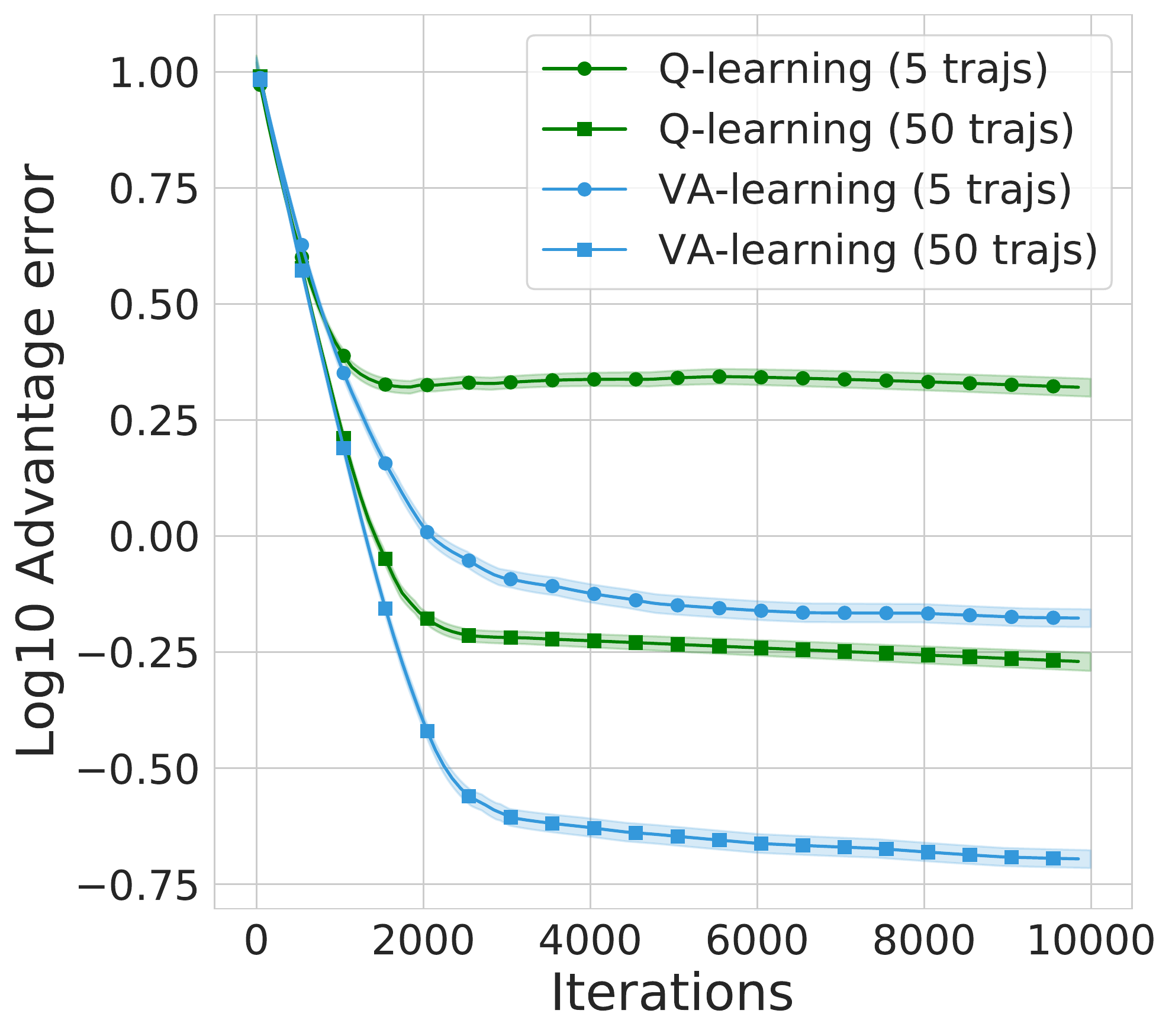}}
    \subfigure[Q-function error]{\includegraphics[keepaspectratio,width=.42\textwidth]{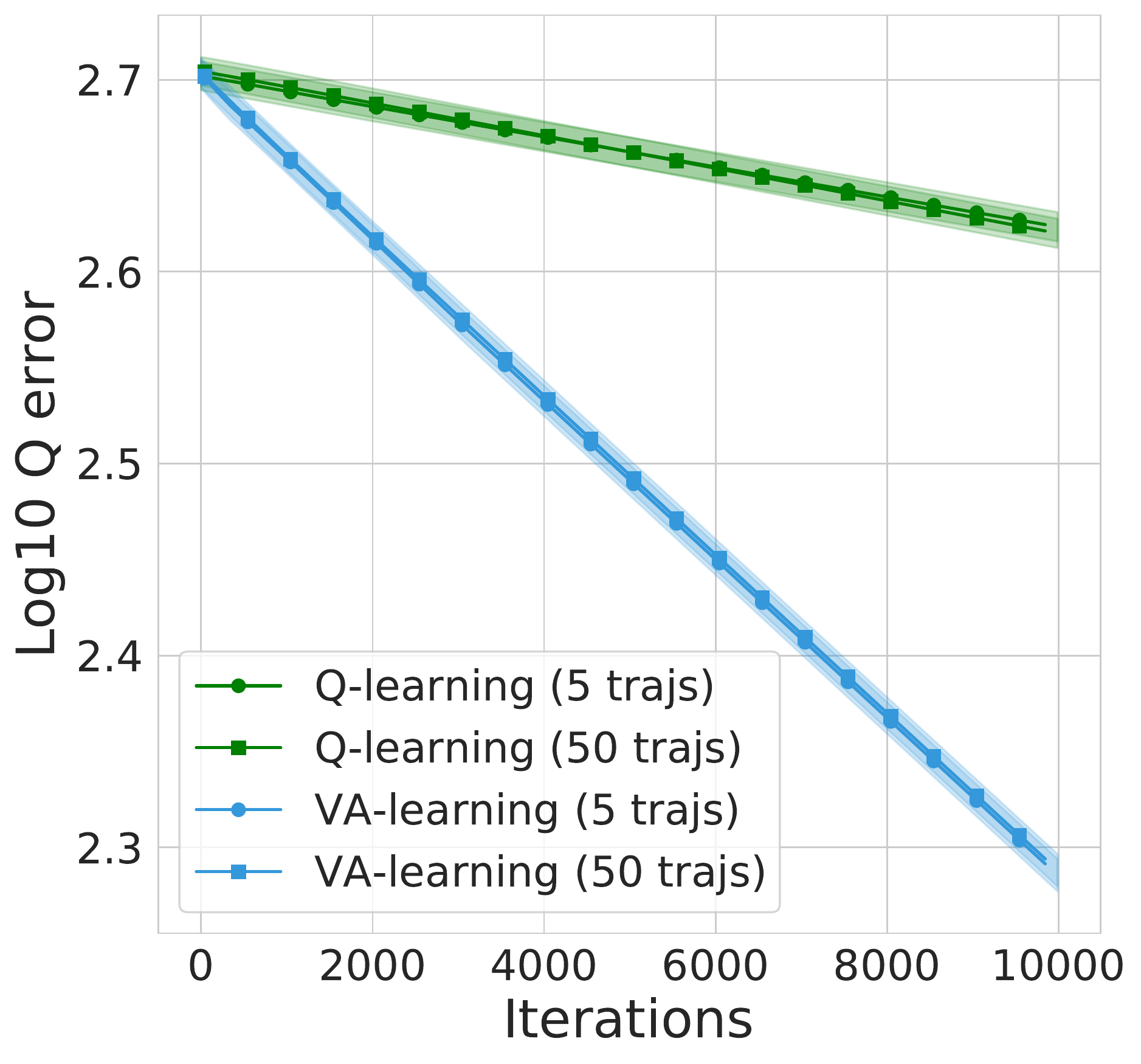}}
    \caption{(a) Comparing VA-learning (Section~\ref{sec:VA-learning}) with Q-learning for tabular policy evaluation. We evaluate a target policy $\pi$ formed as a convex combination of a deterministic policy and uniform policy, using a fixed number of trajectories data collected under uniform policy. The $y$-axis shows the approximation error to the advantage $\left\lVert \widehat{A}_t-A^\pi\right\rVert_2$ at each iteration $k$. Given any data budget, VA-learning obtains more accurate approximations to the advantage function compared to Q-learning. (b) The same setup as before. The $y$-axis shows the approximation error to the Q-function $\left\lVert \widehat{Q}_t-Q^\pi\right\rVert_2$ at each iteration $k$. Given any data budget, VA-learning obtains a slightly faster rate of approximating the Q-function compared to Q-learning. }
    \label{fig:tabular-adv}
\end{figure}

\subsection{Deep RL experiment details}

All the deep RL experiments use the DQN agent \citep{mnih2013} as the baseline agent and use the Atari 57 game suite as the test bed \citep{bellemare2013arcade}. To improve the performance, we apply double Q-learning \citep{van2016deep} and $n$-step bootstrapping in general. Throughout, we use $n=5$; if $n$ is smaller, overall DQN does not benefit fully from multi-step bootstrapping; if $n$ is larger, the performance can suffer due to lack of off-policy corrections. The agent adopts most architecture and hyper-parameters as reported in \citep{mnih2013}. Our agents are all based on the reference implementation in \citep{quandqn}. 

We report the interquartile mean (IQM) score across multiple random seeds for all algorithmic variants that train for $200$M frames \citep{agarwal2021deep}. See the reference for specific procedures for calculating the IQM score and the bootstrapped confidence intervals.

\paragraph{Multi-step bootstrapping, $n$-step Q-learning and VA-learning.} Multi-step bootstrapping usually improves practical performance of deep RL algorithms \citep{hessel2018rainbow}. In $n$-step Q-learning, the agent samples a partial trajectory $\left(X_{0:t},A_{0:t-1},R_{0:t-1}\right)$ starting from state-action pair $(x_t,a_t)$, and constructs the policy evaluation back-up target with target policy $\pi$,
\begin{align*}
    \widehat{\mathcal{T}}^\pi Q_t(x_t,a_t) = \sum_{t=0}^{n-1} \gamma^t r_t + \gamma^n Q_t(x_n, \pi)
\end{align*}
For control, the back-up target is 
\begin{align*}
    \widehat{\mathcal{T}}^\pi Q_t(x_t,a_t) = \sum_{t=0}^{n-1} \gamma^t r_t + \gamma^n \max_a Q_t(x_n, a).
\end{align*}
Finally, $n$-step Q-learning carries out the update $Q_{t+1}(x_t,a_t)  \overset{\alpha_t}{=} \widehat{\mathcal{T}}^\pi Q_t(x_t,a_t)$. VA-learning can adapt to $n$-step bootstrapping as follows: 
\begin{align*}
    V_{t+1}(x_0) &\overset{\alpha_t}{=} \widehat{\mathcal{T}}^\pi Q_t(x_t,a_t) - A_t(x_n,\mu) \\
    A_{t+1}(x_t,a_t) &\overset{\alpha_t}{=} \widehat{\mathcal{T}}^\pi Q_t(x_t,a_t) - A_t(x_n,\mu) - V_t(x_0).
\end{align*}
When $n=1$, the above recovers the VA-learning introduced in the paper as a special case. 

\paragraph{Details on network architecture.} We use the standard DQN architecture specified in \citep{mnih2013}. As described in the main paper, given $4$ stacked frames from the Atari game as input state $x$, the \emph{torso} convolutional neural network processes the image into an embedding $g(x)$. The DQN agent parameterizes a value \emph{head}, which is a MLP that takes $g(x)$ and produces $|\mathcal{A}|$ scalar outputs, each corresponding to a Q-function prediction $Q_\theta(g(x),a)$. The dueling architecture, behavior dueling and VA-learning all parameterize a separate value \emph{head} with one scalar output $V_\theta\left(g(x)\right)$, and an advantage \emph{head} with $|\mathcal{A}|$ scalar outputs $A_\phi\left(g(x),a\right)$. 
VA-learning and behavior dueling further parameterize a policy \emph{head} network $\mu_\psi\left(g(x),a\right)$ that outputs a probability distribution over actions.
The overall Q-function is then produced as \begin{align*}
    Q_{\theta,\phi}\left(g(x),a\right)=V_\theta\left(g(x)\right)+A_\phi(x,a) - \sum_a \mu_\psi\left(g(x),a\right)A_\phi\left(g(x),a\right).
\end{align*}
The torso parameters $g$ are trained with the Q-learning or VA-learning loss function. We put a stop gradient on the torso embedding $g(x)$ when calculating the learned behavior policy distribution, such that the behavior learning loss function does not impact $g$.

\paragraph{Tuning learning rate.} Learning rate is the only hyper-parameter we tune across DQN agent variants. All agents use the RMSProp optimizer \citep{hinton2012neural}. By default, one-step DQN agent uses the learning rate $\beta=2.5\cdot 10^{-4}$. When using $n$-step Q-learning with $n=5$, we find the learning rate is best set smaller to be at $5\cdot 10^{-5}$. When doing VA-learning, behavior dueling and uniform dueling, we find it improves performance further by reducing the learning rate more, to $1.5\cdot 10^{-5}$. 
All learning rates are found by grid search: we start with the default learning rate $\beta$ of DQN, and experiment on a subset of games whether setting learning rates at $\frac{1}{3}\beta$ or $3\beta$ improves the performance. We keep iterating until changing the learning rate does not improve performance anymore.

\paragraph{Using online network in VA-learning.} In theory, the VA-learning loss function 
\begin{align*}
\begin{split}
   L_\text{VA}(\theta,\phi) = \frac{1}{2}\left(V_\theta(x_t) -  \widehat{V}(x_t)\right)^2  + \frac{1}{2} \left(A_\phi(x_t,a_t) -  \widehat{A}(x_t,a_t)\right)^2,
\end{split}
\end{align*}
where the back-up targets 
\begin{align*}
\begin{split}
   \widehat{V}(x_t) &= \widehat{Q}^\pi(x_t,a_t) - A_{\phi^-}(x_{t+1},\mu),  \\
   \widehat{A}(X,A) &= \widehat{Q}^\pi(x_t,a_t) - A_{\phi^-}(x_t,\mu) - V_{\theta^-}(x_t),\   
\end{split}
\end{align*}
are computed from the target network. In deep RL implementations, we find it is important to use online network as the baseline when calculating the back-up target for the advantage function. Effectively, the back-up targets are
\begin{align*}
\begin{split}
   \widehat{V}(x_t) &= \widehat{Q}^\pi(x_t,a_t) - A_{\phi^-}(x_{t+1},\mu),  \\
   \widehat{A}(x_t,a_t) &= \widehat{Q}^\pi(x_t,a_t) - A_{\phi^-}(x_t,\mu) - V_{\theta}(x).\   
\end{split}
\end{align*}
Such a subtle change in implementation brings VA-learning and behavior dueling more similar in practice. 

\paragraph{Huber loss.}
In practice, instead of optimizing the least square loss $x^2$ function, prior work has identified that optimizing the Huber loss is a more robust alternative \citep{quandqn}
\begin{align*}
    \text{huber}(x) = x^2\mathbb{I}\left[\left|x\right|\leq \tau\right] + \left|x\right|\mathbb{I}\left[\left|x\right|> \tau\right],
\end{align*}
where by default $\tau=1$. As a result, the implemented VA-learning loss function is
\begin{align*}
\begin{split}
   L_\text{VA}(\theta,\phi) = \frac{1}{2}\text{huber}\left(V_\theta(x_t) -  \widehat{V}(x_t)\right)  + \frac{1}{2}\text{huber} \left(A_\phi(x_t,a_t) -  \widehat{A}(x_t,a_t)\right),
\end{split}
\end{align*}
while the implemented Q-learning loss function is
\begin{align*}
\begin{split}
   L_\text{QL}(\theta,\phi) = \frac{1}{2}\text{huber}\left(Q_{\theta,\phi}(x_t,a_t) -  \widehat{Q}^\pi(x_t,a_t)\right).  
\end{split}
\end{align*}
In light of this, the equivalence between VA-learning and behavior dueling no longer holds, creating a potentially bigger discrepancy in large-scale settings.

\subsection{Deep RL experiments extra results}

\begin{figure}[t]
    \centering
    \subfigure[Effect of off-policyness]{\includegraphics[keepaspectratio,width=.42\textwidth]{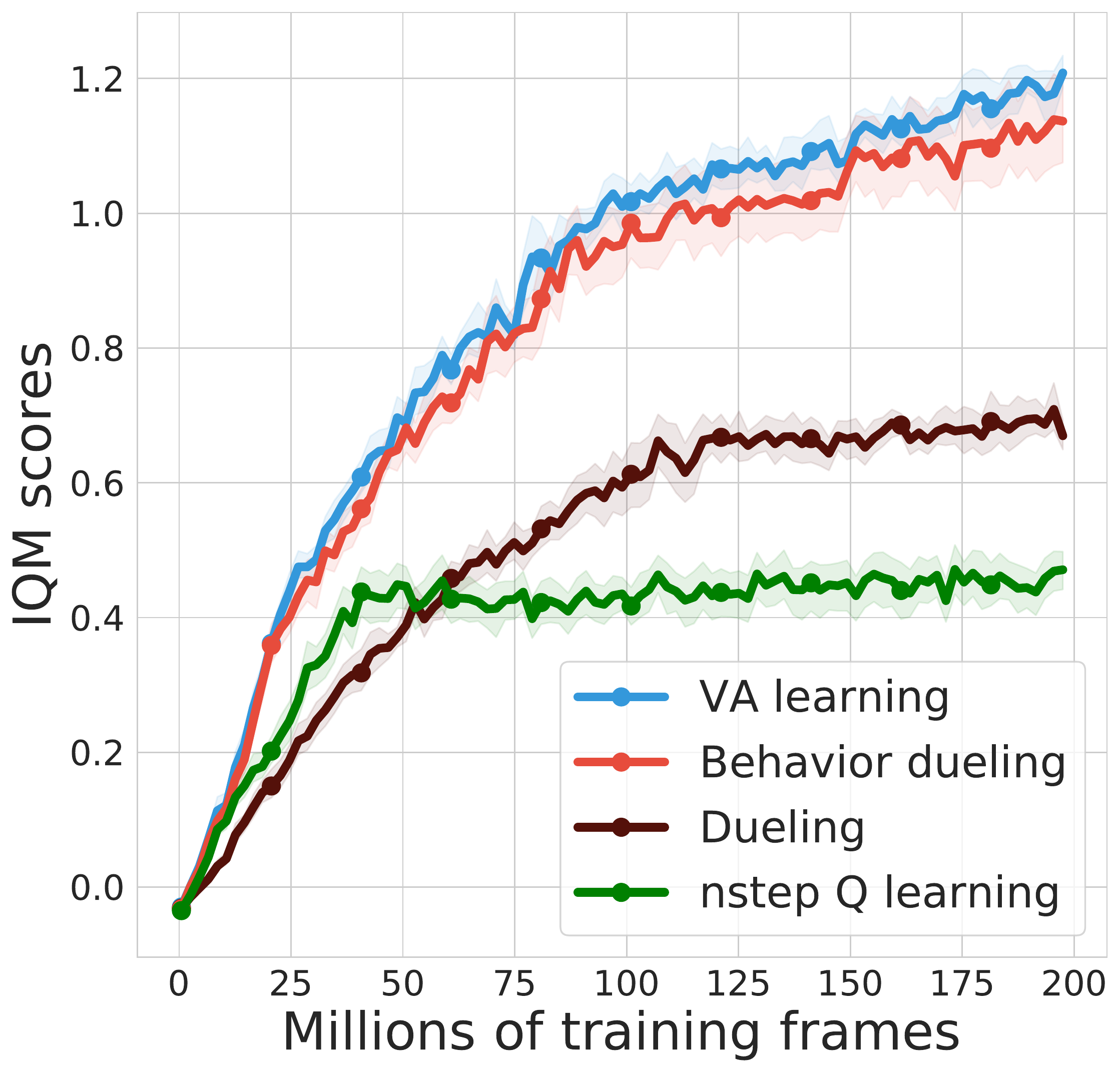}}
    \subfigure[Restricted action set ]{\includegraphics[keepaspectratio,width=.42\textwidth]{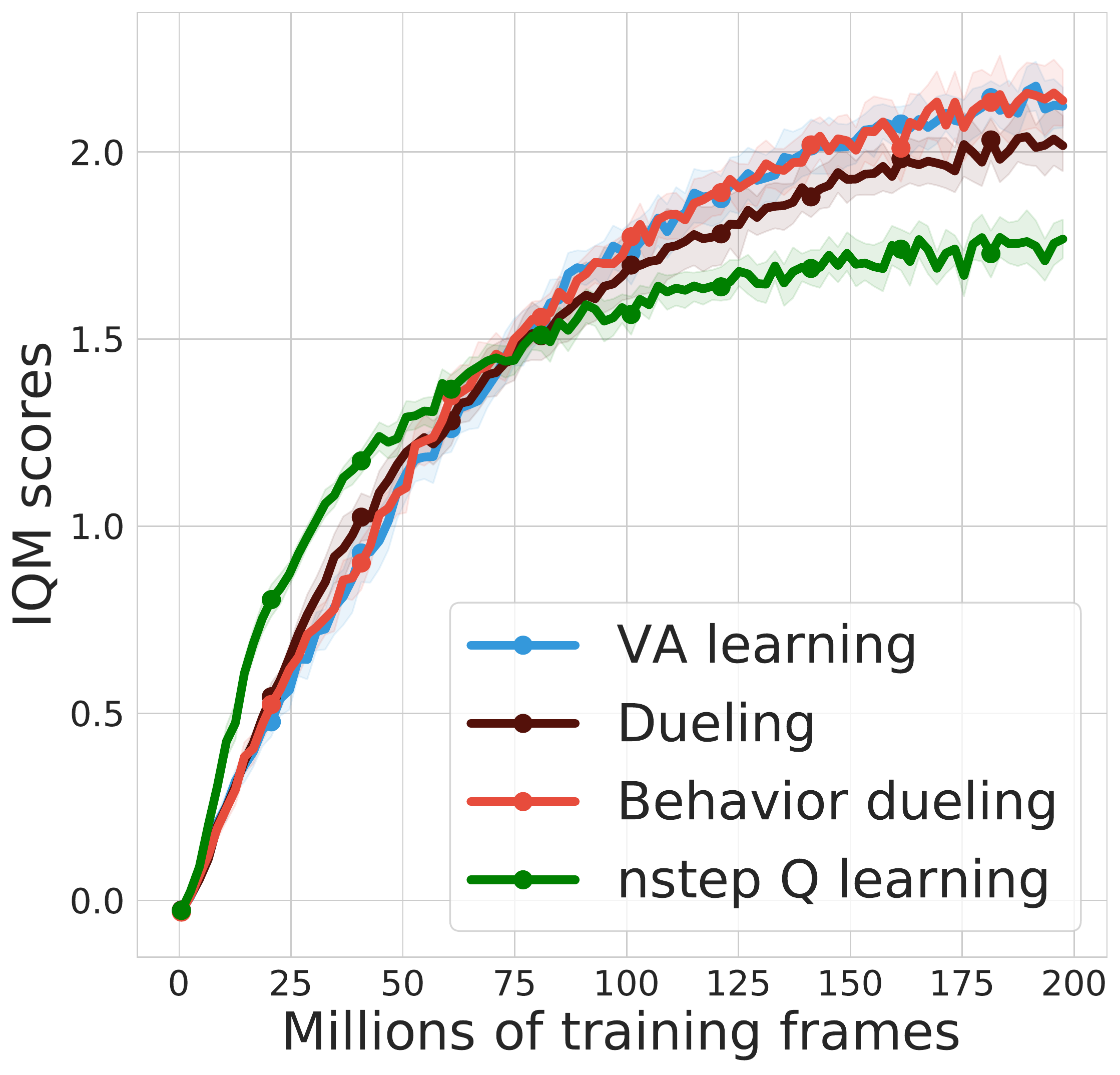}}
\caption{(a) Comparing algorithmic variants implemented with the DQN architecture in the standard Atari setup. The behavior policy is $\epsilon$-greedy to carry out exploration. The $\epsilon$ decays from $1$ to $\epsilon_f$. By default, $\epsilon_f=0.01$. Here, we set $\epsilon_f=0.5$ so that there is a large degree of off-policyness throughout training. VA-learning and behavior dueling achieves significant improvements compared to dueling and baseline Q-learning. (b) Comparing algorithmic variants implemented with the DQN architecture. The baseline agent is $n$-step Q-learning. We further compare with the dueling architecture \citep{wang2016}, the behavior dueling and VA-learning. All agents are evaluated on Atari 57 games and IQM scores \citep{agarwal2021deep} are shown across $3$ seeds. Behavior dueling and VA-learning obtain marginal advantage over dueling.}
    \label{fig:epsilon-dqn}
\end{figure}

\paragraph{Robustness to off-policyness.} We assess the robustness of various algorithmic variants to the level of off-policyness present in the replay. In DQN agents, the behavior policy $\mu$ is $\epsilon$-greedy with the rate of exploration $\epsilon$ decays from $1$ to $\epsilon_f$ over training. By default $\epsilon_f=0.01$. To increase the level of off-policyness overall in training, we set $\epsilon_f=0.5$. In Figure~\ref{fig:epsilon-dqn}(a), we see that VA-learning and behavior dueling both achieve significant performance gains over dueling, whereas the latter improves upon baseline Q-learning. This shows that VA-learning and behavior dueling are more robust to changes in data distribution which deviates from the standard setting, and is hence more robust in general.

\paragraph{Results for restricted action set.}
In Figure~\ref{fig:epsilon-dqn}(b), we compare the performance of different DQN agent variants in the standard Atari game setup. Compatible with observations made in prior work \citep{wang2016}, the dueling architecture achieves significant improvement over the $n$-step Q-learning baseline. Although $n$-step Q-learning learns faster initially, other algorithmic variants catch up as the training progresses and obtains higher asymptotic performance. 

VA-learning and behavior dueling achieve additional, albeit marginal, performance improvements over the dueling architecture. This is a sign that explicitly learning the behavior policy, rather than implicitly assuming it to be uniform, is potentially valuable. We carry out an ablation study that shows how VA-learning and behavior dueling are more robust than dueling and baseline Q-learning in a number of deep RL setups.

\paragraph{Per-game results.} In Table 1, we show the per-game result for the full action Atari game setting. Compatible with Figure~\ref{fig:allaction-dqn}, the improvement of VA-learning and behavior dueling over dueling and $n$-step Q-learning is statistically significant in most cases.

\begin{table*}[t!]
    \centering
    \caption{Per-game result in the full action set setting. We report the mean $\pm$ standard error of scores averaged across the last $5M$ frames. For each game, we highlight the method with statistically highest mean scores (multiple methods are highlighted if their confidence internval overlap). Though VA-learning, behavior dueling and uniform dueling do not improve over $n$-step Q-learning in every game, the improvement is statistically significant in most cases. This is also compatible with the aggregate results shown in Figure~\ref{fig:allaction-dqn}.
    \newline}
    \begin{sc}
    \small
    \begin{tabular}{l|c|c|c|c}\toprule[1.5pt]
        \bf Game & \bf VA-learning & \bf Behavior dueling & \bf Dueling & \bf $n$-step Q-learning \\\midrule
        alien & $0.82 \pm 0.10$ & $0.50 \pm 0.09$ & $\mathbf{1.00 \pm 0.04}$ & $1.47 \pm 0.14$ \\
amidar & $\mathbf{1.31 \pm 0.03}$ & $1.03 \pm 0.11$ & $\mathbf{1.22 \pm 0.09}$ & $0.93 \pm 0.04$ \\
assault & $\mathbf{5.61 \pm 0.28}$ & $\mathbf{5.51 \pm 0.19}$ & $4.44 \pm 0.08$ & $4.50 \pm 0.12$ \\
asterix & $2.11 \pm 0.14$ & $\mathbf{2.42 \pm 0.12}$ & $1.52 \pm 0.08$ & $1.92 \pm 0.05$ \\
asteroids & $\mathbf{0.12 \pm 0.01}$ & $\mathbf{0.10 \pm 0.01}$ & $0.04 \pm 0.00$ & $0.03 \pm 0.00$ \\
atlantis & $51.38 \pm 0.37$ & $\mathbf{53.78 \pm 0.93}$ & $52.92 \pm 1.29$ & $49.79 \pm 0.63$ \\
bank heist & $\mathbf{1.66 \pm 0.10}$ & $1.43 \pm 0.03$ & $1.44 \pm 0.03$ & $1.24 \pm 0.04$ \\
battle zone & $\mathbf{1.36 \pm 0.07}$ & $1.15 \pm 0.06$ & $1.10 \pm 0.02$ & $1.18 \pm 0.02$ \\
beam rider & $\mathbf{0.95 \pm 0.04}$ & $0.85 \pm 0.03$ & $0.78 \pm 0.03$ & $0.81 \pm 0.02$ \\
berzerk & $0.54 \pm 0.11$ & $0.50 \pm 0.10$ & $\mathbf{0.86 \pm 0.06}$ & $0.54 \pm 0.01$ \\
bowling & $0.18 \pm 0.06$ & $\mathbf{0.29 \pm 0.01}$ & $0.10 \pm 0.03$ & $0.16 \pm 0.06$ \\
boxing & $8.20 \pm 0.01$ & $\mathbf{8.24 \pm 0.01}$ & $8.20 \pm 0.01$ & $8.11 \pm 0.01$ \\
breakout & $11.73 \pm 0.16$ & $10.94 \pm 0.28$ & $\mathbf{12.41 \pm 0.10}$ & $\mathbf{12.53 \pm 0.35}$ \\
centipede & $0.19 \pm 0.00$ & $\mathbf{0.22 \pm 0.01}$ & $0.12 \pm 0.01$ & $0.07 \pm 0.02$ \\
chopper command & $\mathbf{1.39 \pm 0.02}$ & $\mathbf{1.43 \pm 0.08}$ & $1.05 \pm 0.03$ & $0.84 \pm 0.02$ \\
crazy climber & $4.96 \pm 0.06$ & $4.77 \pm 0.09$ & $4.98 \pm 0.06$ & $\mathbf{5.27 \pm 0.06}$ \\
defender & $\mathbf{3.52 \pm 0.10}$ & $\mathbf{3.56 \pm 0.11}$ & $2.85 \pm 0.05$ & $1.90 \pm 0.08$ \\
demon attack & $\mathbf{47.18 \pm 2.93}$ & $\mathbf{51.03 \pm 4.26}$ & $6.37 \pm 0.21$ & $24.23 \pm 1.97$ \\
double dunk & $\mathbf{18.60 \pm 0.10}$ & $\mathbf{18.50 \pm 0.15}$ & $17.96 \pm 0.09$ & $17.41 \pm 0.30$ \\
enduro & $1.87 \pm 0.06$ & $1.90 \pm 0.07$ & $\mathbf{2.22 \pm 0.05}$ & $1.48 \pm 0.06$ \\
fishing derby & $2.64 \pm 0.05$ & $\mathbf{2.78 \pm 0.01}$ & $2.73 \pm 0.03$ & $2.62 \pm 0.01$ \\
freeway & $1.10 \pm 0.00$ & $1.10 \pm 0.00$ & $1.11 \pm 0.00$ & $\mathbf{1.13 \pm 0.00}$ \\
frostbite & $0.49 \pm 0.22$ & $\mathbf{1.08 \pm 0.09}$ & $\mathbf{1.03 \pm 0.07}$ & $0.43 \pm 0.19$ \\
gopher & $6.02 \pm 0.22$ & $6.13 \pm 0.21$ & $5.89 \pm 0.30$ & $\mathbf{9.76 \pm 0.53}$ \\
gravitar & $\mathbf{0.31 \pm 0.05}$ & $0.17 \pm 0.00$ & $0.14 \pm 0.02$ & $0.19 \pm 0.03$ \\
hero & $\mathbf{1.24 \pm 0.01}$ & $1.20 \pm 0.00$ & $0.58 \pm 0.08$ & $0.48 \pm 0.03$ \\
ice hockey & $1.35 \pm 0.03$ & $\mathbf{1.42 \pm 0.02}$ & $0.92 \pm 0.04$ & $1.14 \pm 0.04$ \\
jamesbond & $\mathbf{35.55 \pm 7.21}$ & $25.31 \pm 6.93$ & $11.81 \pm 1.41$ & $16.96 \pm 1.17$ \\
kangaroo & $4.27 \pm 0.12$ & $\mathbf{4.43 \pm 0.06}$ & $\mathbf{4.49 \pm 0.03}$ & $3.77 \pm 0.07$ \\
krull & $7.88 \pm 0.08$ & $\mathbf{8.61 \pm 0.23}$ & $8.30 \pm 0.06$ & $7.70 \pm 0.20$ \\
montezuma revenge & $\mathbf{0.00 \pm 0.00}$ & $\mathbf{0.00 \pm 0.00}$ & $\mathbf{0.00 \pm 0.00}$ & $\mathbf{0.00 \pm 0.00}$ \\
ms pacman & $\mathbf{0.64 \pm 0.01}$ & $0.53 \pm 0.01$ & $0.58 \pm 0.04$ & $\mathbf{0.67 \pm 0.02}$ \\
name this game & $\mathbf{1.81 \pm 0.04}$ & $\mathbf{1.81 \pm 0.04}$ & $1.72 \pm 0.01$ & $1.32 \pm 0.02$ \\
phoenix & $\mathbf{10.00 \pm 0.42}$ & $\mathbf{10.39 \pm 0.73}$ & $4.02 \pm 0.32$ & $2.82 \pm 0.15$ \\
pitfall & $\mathbf{0.03 \pm 0.00}$ & $\mathbf{0.03 \pm 0.00}$ & $\mathbf{0.03 \pm 0.00}$ & $\mathbf{0.03 \pm 0.00}$ \\
pong & $1.16 \pm 0.00$ & $1.15 \pm 0.00$ & $\mathbf{1.17 \pm 0.00}$ & $\mathbf{1.17 \pm 0.00}$ \\
private eye & $\mathbf{0.00 \pm 0.00}$ & $\mathbf{0.00 \pm 0.00}$ & $\mathbf{0.00 \pm 0.00}$ & $\mathbf{0.00 \pm 0.00}$ \\
qbert & $1.51 \pm 0.04$ & $1.42 \pm 0.04$ & $1.46 \pm 0.04$ & $\mathbf{1.59 \pm 0.06}$ \\
riverraid & $1.12 \pm 0.03$ & $1.17 \pm 0.05$ & $\mathbf{1.20 \pm 0.05}$ & $1.18 \pm 0.02$ \\
road runner & $\mathbf{7.81 \pm 0.07}$ & $\mathbf{7.87 \pm 0.05}$ & $7.38 \pm 0.03$ & $7.34 \pm 0.21$ \\
robotank & $5.97 \pm 0.13$ & $6.02 \pm 0.04$ & $4.02 \pm 0.24$ & $\mathbf{6.61 \pm 0.13}$ \\
seaquest & $0.05 \pm 0.00$ & $0.03 \pm 0.00$ & $0.04 \pm 0.00$ & $\mathbf{0.43 \pm 0.04}$ \\
skiing & $\mathbf{0.73 \pm 0.02}$ & $0.66 \pm 0.03$ & $0.69 \pm 0.00$ & $-0.51 \pm 0.11$ \\
solaris & $0.01 \pm 0.00$ & $-0.01 \pm 0.01$ & $0.07 \pm 0.01$ & $\mathbf{0.09 \pm 0.02}$ \\
space invaders & $\mathbf{2.52 \pm 0.06}$ & $1.64 \pm 0.02$ & $\mathbf{2.68 \pm 0.11}$ & $\mathbf{2.84 \pm 0.37}$ \\
star gunner & $8.88 \pm 0.71$ & $\mathbf{10.63 \pm 0.27}$ & $7.53 \pm 0.10$ & $7.15 \pm 0.14$ \\
surround & $0.33 \pm 0.05$ & $0.65 \pm 0.10$ & $\mathbf{0.72 \pm 0.04}$ & $0.43 \pm 0.03$ \\
tennis & $1.52 \pm 0.01$ & $\mathbf{1.53 \pm 0.00}$ & $1.43 \pm 0.05$ & $1.19 \pm 0.11$ \\
time pilot & $\mathbf{11.65 \pm 0.36}$ & $9.07 \pm 0.51$ & $6.42 \pm 0.22$ & $6.00 \pm 0.18$ \\
tutankham & $\mathbf{1.43 \pm 0.07}$ & $1.25 \pm 0.06$ & $1.33 \pm 0.05$ & $0.50 \pm 0.05$ \\
up n down & $6.40 \pm 0.36$ & $\mathbf{7.62 \pm 0.28}$ & $6.32 \pm 0.32$ & $1.15 \pm 0.05$ \\
venture & $0.00 \pm 0.00$ & $0.06 \pm 0.03$ & $0.16 \pm 0.08$ & $\mathbf{0.79 \pm 0.05}$ \\
video pinball & $345.32 \pm 27.56$ & $194.08 \pm 19.95$ & $149.51 \pm 48.98$ & $\mathbf{417.23 \pm 26.12}$ \\
wizard of wor & $\mathbf{4.33 \pm 0.34}$ & $\mathbf{3.99 \pm 0.10}$ & $2.70 \pm 0.09$ & $0.75 \pm 0.02$ \\
yars revenge & $0.80 \pm 0.02$ & $0.57 \pm 0.21$ & $1.00 \pm 0.02$ & $\mathbf{1.11 \pm 0.02}$ \\
zaxxon & $2.92 \pm 0.06$ & $\mathbf{3.16 \pm 0.09}$ & $2.57 \pm 0.13$ & $1.43 \pm 0.05$ \\
        \bottomrule[1.46pt]
    \end{tabular}
    \end{sc}
    \label{table:full-action-per-game}
\end{table*}

\section{Connections between behavior dueling and VA-learning}
\label{appendix:why-bdueling}

We provide an in-depth discussion on the connection between behavior dueling and VA-learning in this section. Recall that in VA-learning, the value function  $V_\theta$ and advantage function $A_\phi$ are updated by minimizing the squared losses
\begin{align*}
   \frac{1}{2}\left(V_\theta(x_t) -  \widehat{V}(x_t)\right)^2 + \frac{1}{2} \left(A_\phi(x_t,a_t) -  \widehat{A}(x_t,a_t)\right)^2.
\end{align*}
as shown in Eqn~\eqref{eq:loss}. In behavior dueling, the Q-function is parameterized as $Q_{\theta,\phi}(x,a)=V_\theta(x)+A_\phi(x,a)-A_\phi(x,\mu)$. The parameters are jointly updated with gradient descents on the loss function
\begin{align}
    L_\text{QL}(\theta,\phi) = \frac{1}{2} \left(Q_{\theta,\phi}(x_t,a_t) - \widehat{Q}^\pi(x_t,a_t)\right)^2
\end{align}
as in Eqn~\eqref{eq:bdueling-q}. The gradients with respect to $\theta$ and $\phi$ correspond to the updates for the value function and advantage function components of the Q-function. We examine the value gradient and advantage gradient in detail below. Our main findings are:
\begin{itemize}
    \item Value gradients are equal in expectation for both VA-learning and behavior dueling, i.e.,
    \begin{align*}
    \mathbb{E}_\mu\left[\nabla_{\theta} L_\text{QL}(\theta,\phi)\;\middle|\;x_t\right]=\mathbb{E}_\mu\left[\nabla_{\theta} L_\text{VA}(\theta,\phi)\;\middle|\;x_t\right].
\end{align*}
\item There are subtle differences between advantage gradients differ for VA-learning and behavior dueling. Both updates bear the interpretation of fitting certain advantage function components.
\end{itemize}

\subsection{Value gradient}

We show that the value gradient of VA-learning and TD-learning with behavior dueling are equal in expectation. Similar conclusions apply for Q-learning.
\begin{restatable}{lemma}{lemmaequivalence}\label{lemma:equivalence} When the target network is the same as the online network $\theta^-=\theta,\phi^-=\phi$, then in expectation, the VA-learning value gradient is the same as the gradient of TD-learning with behavior dueling 
\begin{align*}
    \mathbb{E}_\mu\left[\nabla_{\theta} L_\text{QL}(\theta,\phi)\;\middle|\;x_t\right]=\mathbb{E}_\mu\left[\nabla_{\theta} L_\text{VA}(\theta,\phi)\;\middle|\;x_t\right],
\end{align*}
where the expectation is over the action $a_t\sim \mu(\cdot|x_t)$.
\end{restatable}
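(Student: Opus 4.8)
The plan is to compute both gradients explicitly, observe that each factors as a scalar residual times the common Jacobian $\nabla_\theta V_\theta(x_t)$, and then show the expected residuals agree. First I would note that $\theta$ enters $L_\text{QL}$ only through the value head inside $Q_{\theta,\phi}(x_t,a_t)=V_\theta(x_t)+A_\phi(x_t,a_t)-A_\phi(x_t,\mu)$, while the backup target $\widehat{Q}^\pi$ is held fixed (no gradient flows through the target network), so that
\[
\nabla_\theta L_\text{QL}(\theta,\phi)=\bigl(V_\theta(x_t)+A_\phi(x_t,a_t)-A_\phi(x_t,\mu)-\widehat{Q}^\pi(x_t,a_t)\bigr)\nabla_\theta V_\theta(x_t).
\]
Likewise $\theta$ enters $L_\text{VA}$ (Eqn~\eqref{eq:loss}) only through the first squared term, since the second term depends on $\phi$ and on the fixed target $\widehat{A}$; this gives
\[
\nabla_\theta L_\text{VA}(\theta,\phi)=\bigl(V_\theta(x_t)-\widehat{V}(x_t)\bigr)\nabla_\theta V_\theta(x_t).
\]
Because $\nabla_\theta V_\theta(x_t)$ is deterministic given $x_t$ and does not depend on $a_t$, the claim reduces to matching the two scalar residuals in expectation under $a_t\sim\mu(\cdot|x_t)$.

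For the behavior dueling residual, the key observation is that the zero-mean correction averages out: by definition $A_\phi(x_t,\mu)=\sum_a\mu(a|x_t)A_\phi(x_t,a)$, hence $\mathbb{E}_\mu[A_\phi(x_t,a_t)-A_\phi(x_t,\mu)\mid x_t]=0$. Consequently
\[
\mathbb{E}_\mu\bigl[V_\theta(x_t)+A_\phi(x_t,a_t)-A_\phi(x_t,\mu)-\widehat{Q}^\pi(x_t,a_t)\,\bigm|\,x_t\bigr]=V_\theta(x_t)-\mathbb{E}_\mu\bigl[\widehat{Q}^\pi(x_t,a_t)\mid x_t\bigr],
\]
so the behavior dueling expected residual is governed entirely by the expected backup target.

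It then remains to identify the two backup targets. I would unfold them under the hypothesis $\theta^-=\theta,\phi^-=\phi$. Behavior dueling uses $\widehat{Q}^\pi(x_t,a_t)=r_t+\gamma Q_{\theta,\phi}(x_{t+1},\pi)$ with its zero-mean parameterization, whose successor value is $V_\theta(x_{t+1})+A_\phi(x_{t+1},\pi)-A_\phi(x_{t+1},\mu)$; VA-learning instead uses the plain parameterization $V_\theta+A_\phi$ inside its target (Eqn~\eqref{eq:VA-learning-function}) and then explicitly subtracts $\gamma A_\phi(x_{t+1},\mu)$, so that
\[
\widehat{V}(x_t)=r_t+\gamma\bigl(V_\theta(x_{t+1})+A_\phi(x_{t+1},\pi)\bigr)-\gamma A_\phi(x_{t+1},\mu).
\]
These two expressions coincide for every realization of $(a_t,r_t,x_{t+1})$, so $\widehat{V}(x_t)=\widehat{Q}^\pi(x_t,a_t)$ pointwise and therefore in expectation; substituting back makes the two expected residuals equal, and multiplying by the common $\nabla_\theta V_\theta(x_t)$ yields the lemma. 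The only delicate point is precisely this last reconciliation: the two algorithms use \emph{different} Q-function parameterizations, and the crux is recognizing that VA-learning's explicit $-\gamma A_\phi(x_{t+1},\mu)$ term at the successor state reproduces exactly the zero-mean subtraction that behavior dueling bakes into its Q-function, so the value targets agree despite the superficial mismatch.
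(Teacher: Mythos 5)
Your proposal is correct and follows essentially the same route as the paper's proof: compute both gradients as a scalar residual times $\nabla_\theta V_\theta(x_t)$, kill the advantage term in the dueling residual by the zero-mean property under $\mu$, and identify the two backup targets $\widehat{Q}^\pi(x_t,a_t)=\widehat{V}(x_t)=r_t+\gamma V(x_{t+1})+\gamma A(x_{t+1},\pi)-\gamma A(x_{t+1},\mu)$ pointwise. Your write-up is in fact somewhat more explicit than the paper's about the final reconciliation of the two parameterizations, which the paper dispatches with ``by definition.''
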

\begin{proof}
For simplicity, all our derivations below assume $\theta^-=\theta,\phi^-=\phi$.
We can write for behavior dueling
\begin{align*}
    \nabla_\theta L_\text{QL}(\theta,\phi) = \left(V_\theta(x_t) + A_\phi(x_t,a_t) - \widehat{Q}^\pi(x_t,a_t)\right) \nabla_\theta V_\theta(x_t).
\end{align*}
Now, taking expectation over the actions $a_t\sim \mu(\cdot|x_t)$ and note that $A_\phi(x_t,\mu)=0$ due to the behavior dueling parameterization, we have
\begin{align*}
    \mathbb{E}\left[L_\text{QL}(\theta,\phi)\;\middle|\;x_t\right] = \mathbb{E}\left[ \left(V_\theta(x_t) - \widehat{Q}^\pi(x_t,a_t)\right) \nabla_\theta V_\theta(x_t)\;\middle|\;x_t \right],
\end{align*}
where $\widehat{Q}^\pi(x_t,a_t) = r_t+\gamma V(x_{t+1}) +\gamma A(x_{t+1},\pi) - \gamma A(x_{t+1},\mu)$.
Examining the value gradient for the VA-learning case, we have
\begin{align*}
    \nabla_\theta L_\text{VA}(\theta,\phi) = \left(V_\theta(x_t) - \widehat{V}^\pi(x_t)\right)\nabla_\theta V_\theta(x_t).
\end{align*}
But note that $\widehat{V}^\pi(x_t) = r_t+\gamma V(x_{t+1}) +\gamma A(x_{t+1},\pi) - \gamma A(x_{t+1},\mu)$ by definition. This means
\begin{align*}
    \mathbb{E}\left[\nabla_\theta L_\text{VA}(\theta,\phi)\;\middle|\;x_t\right] =  \mathbb{E}\left[\nabla_\theta L_\text{QL}(\theta,\phi)\;\middle|\;x_t\right]
\end{align*}
and hence the proof is concluded.
\end{proof}
The above equivalence implies that both VA-learning and behavior dueling carry out value updates that fit the value function targets $\widehat{V}^\pi(x_t)$.

\subsection{Advantage gradient}
For ease of presentation, we assume a tabular parameterization for the advantage function. For VA-learning, we consider the gradient $\nabla_{A(x,a)}$ for a fixed state-action pair $(x,a)$. We can derive
\begin{align*}
    \mathbb{E}\left[\nabla_{A(x,a)} L_\text{VA}(\theta,\phi)\;\middle|\;x_t=x\right] &= \mu(a|x) \left(V(x) - \mathcal{T}^\pi Q(x,a) \right)
\end{align*}
To obtain a better intuition for the above update, note that since $V(x)$ is meant to fit the average back-up target $\mathcal{T}^\pi Q(x,a)$, the difference $V(x) - \mathcal{T}^\pi Q(x,a)$ can be understood as the residual learning target. The multiplier $\mu(a|x)$ represents the magnitude of the update, thanks to the sampling behavior distribution.

On the other hand, for behavior dueling with Q-learning, recall that we use the parameterization $A(x,a)=f(x,a) - f(x,\mu)$ and we start by considering the gradient of $\nabla_{f(x,a)}$
\begin{align*}
    \mathbb{E}\left[\nabla_{f(x,a)} L_\text{QL}(\theta,\phi)\;\middle|\;x_t=x\right] &= \mu(a|x) \left[\left(Q(x,a) - \mathcal{T}^\pi Q(x,a)\right) - \sum_b \mu(b|x) \left(Q(x,b) - \mathcal{T}^\pi Q(x,b)\right)\right]
\end{align*}
As before, the multiplier $\mu(a|x)$ is a result of the sampling distribution. The gradient can be understood as the difference between the TD error $\delta(x,a) = Q(x,a) - \mathcal{T}^\pi Q(x,a)$ and the average TD error $\sum_b \mu(b|x)\delta(x,a)$. Therefore, we can also understand the gradient to $f$ as the residual learning target. In general, however, the advantage gradient update for VA-learning and behavior dueling differ.

\subsection{Why behavior dueling is better than dueling}
Following the discussion in the main paper, we consider a parameterization $A_\nu(x,a)=f(x,a) - f(x,\nu)$ with distribution $\nu$. The notation $A_\nu$ is meant to emphasize that the advantage function depends on the distribution $\nu$. Since $A(x,a)$ represents the unshared commonents of different Q-functions, we might be interested in minimizing such unshared information. Consider the squared norm as such an objective to minimize
\begin{align*}
    \min_\nu \sum_a \mu(a|x) A_\nu(x,a)^2.
\end{align*}
\begin{restatable}{lemma}{lemmaminimizer}\label{lemma:minimizer} Across all possible parameterizations with function $f$, the unique minimizer to the weighted squared norm is $\nu=\mu$. Formally,
\begin{align*}
   \mu = \arg\min_\nu \max_f \sum_a \mu(a|x) A_\nu(x,a)^2
\end{align*}
\end{restatable}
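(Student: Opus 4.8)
The plan is to fix a state $x$ and reduce the inner objective to a one-dimensional problem in the scalar $f(x,\nu) = \sum_a \nu(a|x) f(x,a)$. Write $J(\nu, f) := \sum_a \mu(a|x) A_\nu(x,a)^2 = \sum_a \mu(a|x)\bigl(f(x,a) - f(x,\nu)\bigr)^2$. The key observation is that, for fixed $f$, this quantity depends on $\nu$ only through the single scalar $c = f(x,\nu)$, and the map $c \mapsto \sum_a \mu(a|x)\bigl(f(x,a) - c\bigr)^2$ is a strictly convex quadratic whose unique minimizer is the $\mu$-weighted mean $c^\star = \sum_a \mu(a|x) f(x,a) = f(x,\mu)$. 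Since $\nu = \mu$ realizes exactly $f(x,\nu) = c^\star$, it attains this minimum.

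The heart of the argument is the exact bias--variance identity
\begin{align*}
    J(\nu, f) = J(\mu, f) + \bigl(f(x,\mu) - f(x,\nu)\bigr)^2,
\end{align*}
which I would establish first. It follows by writing $f(x,a) - f(x,\nu) = \bigl(f(x,a) - f(x,\mu)\bigr) + \bigl(f(x,\mu) - f(x,\nu)\bigr)$, expanding the square, and noting that the cross term vanishes because $\sum_a \mu(a|x)\bigl(f(x,a) - f(x,\mu)\bigr) = 0$. This identity immediately shows $J(\nu, f) \ge J(\mu, f)$ for every $f$, so $\nu = \mu$ minimizes $J(\cdot, f)$ simultaneously for all $f$, with suboptimality gap exactly $\bigl(f(x,\mu) - f(x,\nu)\bigr)^2$.

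For the uniqueness claim encoded by the outer $\arg\min$ over the worst-case-over-$f$ objective, I would argue as follows. The gap $\bigl(f(x,\mu) - f(x,\nu)\bigr)^2$ vanishes for all $f$ if and only if $\sum_a \bigl(\mu(a|x) - \nu(a|x)\bigr) f(x,a) = 0$ for every function $f(x,\cdot)$. Testing against indicator functions $f(x,\cdot) = \mathbb{I}[\,\cdot = a_0\,]$ for each action $a_0$ forces $\mu(a_0|x) = \nu(a_0|x)$, hence $\nu = \mu$. Conversely, any $\nu \neq \mu$ differs from $\mu$ on some action, so there is an $f$ making the gap strictly positive; such a $\nu$ is then strictly worse than $\mu$ under that adversarial $f$ and cannot be the minimizer of the $\max_f$ objective.

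The main obstacle is pinning down the precise meaning of the $\max_f$: the unnormalized inner maximum is $+\infty$ for every $\nu$, so the statement must be read relative to the optimal value $J(\mu, f)$ --- equivalently, under a normalization such as fixing $\mathrm{Var}_\mu(f)$, or by measuring the excess $J(\nu, f) - \min_{\nu'} J(\nu', f)$. Once the objective is interpreted as worst-case excess \emph{unshared information}, the identity above makes the analysis routine, and the only remaining work is the linear-algebra uniqueness argument via testing against indicators.
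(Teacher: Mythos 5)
Your proof is correct and follows essentially the same route as the paper's: both rest on the bias--variance identity $\mathbb{E}_{a\sim\mu(\cdot|x)}\left[\left(f(x,a)-c\right)^2\right]=\mathbb{V}_{a\sim\mu(\cdot|x)}\left[f(x,a)\right]+\left(f(x,\mu)-c\right)^2$ with $c=f(x,\nu)$, which shows $\nu=\mu$ attains the minimum for every $f$ simultaneously. You are in fact more careful than the paper on the two points its proof glosses over --- that the literal $\max_f$ is $+\infty$ for every $\nu$ unless the objective is read as a worst-case excess or under a normalization of $f$, and that uniqueness of $\nu=\mu$ requires testing the gap $\left(f(x,\mu)-f(x,\nu)\right)^2$ against indicator functions --- so there is no gap to report.
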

\begin{proof}
We can rewrite the squared norm objective as
\begin{align*}
   \sum_a \mu(a|x) A_\nu(x,a)^2 = \mathbb{E}_{a\sim \mu(\cdot|x)}\left[\left(f(x,a) - f(x,\nu)\right)^2\right] \geq_{(a)} \mathbb{V}_{a\sim \mu(\cdot|x)}\left[f(x,a)\right].
\end{align*}
The equality at (a) is achieved when $f(x,\nu)=\mathbb{E}_{a\sim \mu(\cdot|x)}\left[f(x,a)\right]=f(x,\mu)$. This for any fixed $f$, the minimizing distribution $\nu$ is such that $f(x,\nu)=f(x,\mu)$. For a specific $f$, the minimizing distribution $\nu$ might not be unique. However, it is straightforward to see that across all possible distributions $\nu=\mu$ is the unique minimizer.
\end{proof}
Since the behavior dueling at $\nu=\mu$ minimizes the weighted squared norm of the advantage function, it can be understood as minimizing the unshared information across actions and hence maximizing the shared components. 

On tabular experiments, we validate such a theoretical insight in Figure~\ref{fig:adv-norm}. Across $20$ randomly generated MDPs, we run Q-learning with behavior dueling vs. dueling. At iteration $t$, let $A(x,a)=f(x,a)-f(x,\nu)$ be the advantage function of the dueling algorithms ($\nu=u$ where $u$ is uniform for dueling; and $\nu=\mu$ for behavior dueling). We measure three quantities over time: (1) for dueling, we compute $\left(f(x,a) - f(x,\nu)\right)^2$ (red); (2) for dueling, we also compute $\left(f(x,a) - f(x,\mu)\right)^2$ (blue) and (3) for behavior dueling, we compute $\left(f(x,a) - f(x,\mu)\right)^2$ (brown). All statistics are averaged over training samples, generated under the behavior policy. Comparing (2) and (3), we empirically verify that indeed, the behavior dueling parameterization obtains lower weighted advantage norm compared to the uniform dueling.
\begin{figure}[h]
    \centering
    \includegraphics[keepaspectratio,width=.42\textwidth]{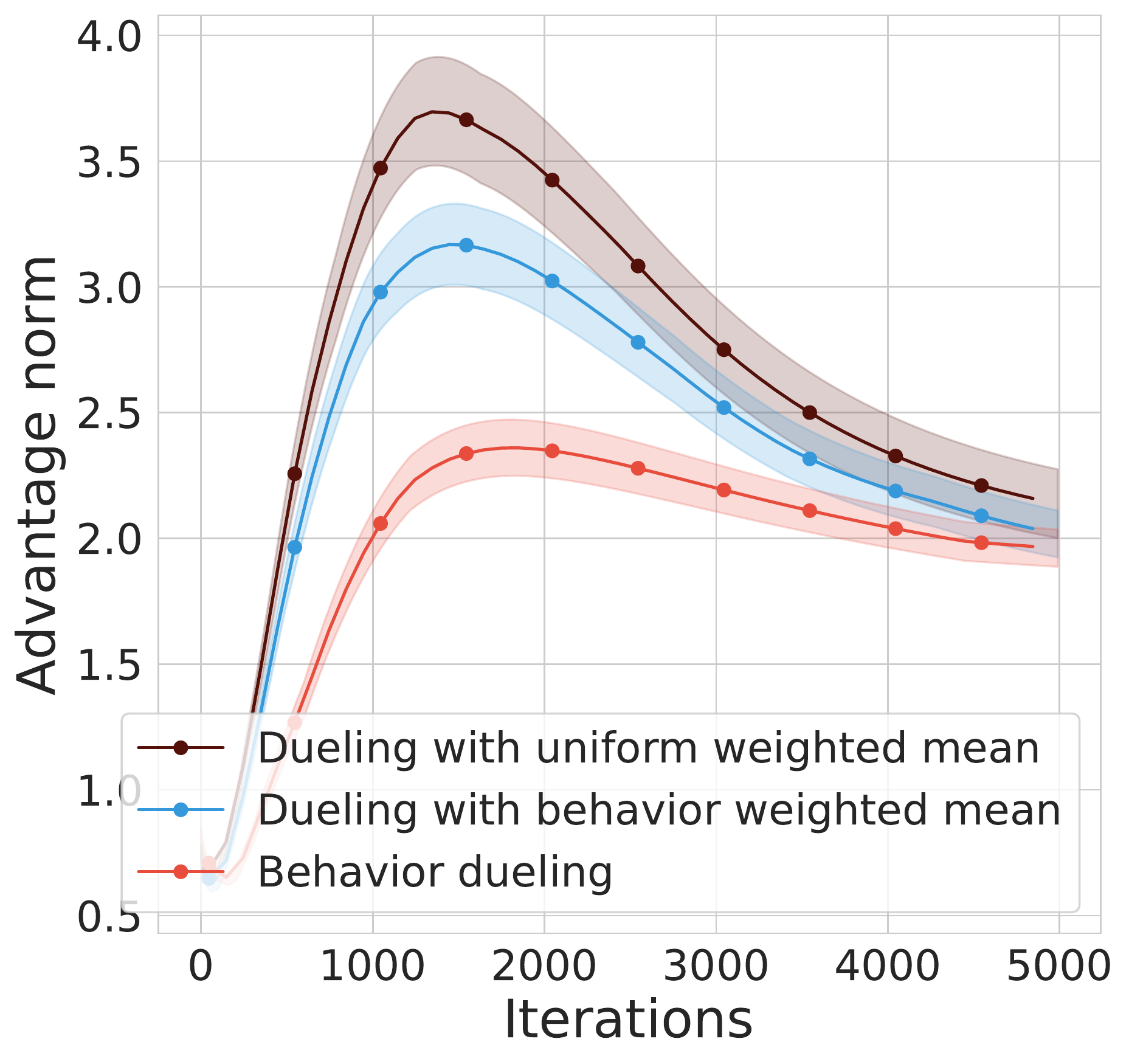}
    \caption{We compare the squared advantage norm over training iterations, across $20$ randomly generated MDPs, between behavior dueling and uniform dueling. the behavior dueling parameterization indeed obtains a lower squared norm for the advantage function compared to uniform dueling, as suggested by the theoretical arguments above.}
    \label{fig:adv-norm}
\end{figure}

\end{appendix}

\end{document}